\documentclass[journal]{IEEEtran}
\usepackage{cite}
\usepackage{amsmath,amssymb,amsfonts}
\usepackage{graphicx}
\usepackage{textcomp}
\usepackage{csquotes}
\usepackage{xcolor,soul}
\usepackage{adjustbox}
\usepackage{hyperref}
\usepackage{multirow}
\usepackage{titlesec}
\usepackage{subcaption}
\usepackage{makecell}
\usepackage{orcidlink}
\usepackage{booktabs} 
\usepackage{caption} 
\usepackage[numbers]{natbib}
\usepackage{cancel}

\usepackage{etoolbox}
\usepackage{adjustbox}
\usepackage{amsthm}
\newtheorem{proposition}{Proposition}
\usepackage{siunitx}
\usepackage{algorithm}
\usepackage{algpseudocode}

\definecolor{columbiablue}{rgb}{0.61, 0.87, 1.0}
\titlespacing{\subsubsection}{0pt}{0pt}{1pt}

\begin{document}

\title{Bayesian Learning-Enhanced Navigation with Deep Smoothing for Inertial-Aided Navigation}

\author{Nadav~Cohen \orcidlink{0000-0002-8249-0239}
        and~Itzik~Klein \orcidlink{0000-0001-7846-0654}
\thanks{N. Cohen and I. Klein are with the Hatter Department of Marine Technologies, Charney School of Marine Sciences, University of Haifa, Israel.\\ Corresponding author: ncohe140@campus.haifa.ac.il (N.Cohen)}
}

\maketitle

\begin{abstract}
Accurate post-processing navigation is essential for applications such as survey and mapping, where the full measurement history can be exploited to refine past state estimates. Fixed-interval  smoothing algorithms represent the theoretically optimal solution  under Gaussian assumptions. However, loosely coupled INS/GNSS systems fundamentally inherit the systematic position bias of raw GNSS measurements, leaving a persistent accuracy gap that model-based smoothers cannot resolve.  To address this limitation, we propose BLENDS, which integrates Bayesian learning with deep smoothing to enhance navigation performance. BLENDS is a a data-driven post-processing 
framework that augments the classical two-filter smoother with a transformer-based neural network. It learns to modify the filter covariance matrices and apply an additive correction to the smoothed  error-state directly within the Bayesian framework. A novel Bayesian-consistent loss jointly supervises the smoothed mean and 
covariance, enforcing minimum-variance estimates while maintaining  statistical consistency. BLENDS is evaluated on two real-world  datasets spanning a mobile robot and a quadrotor. Across all  unseen test trajectories, BLENDS achieves horizontal position  improvements of up to 63\% over the baseline forward EKF.
\end{abstract}

\begin{IEEEkeywords}
Inertial Navigation System, GNSS, Extended Kalman Smoothing, Two-Filter Smoother, Deep Learning, Sensor Fusion
\end{IEEEkeywords}

\IEEEpeerreviewmaketitle

\section{Introduction}
\noindent
\IEEEPARstart{A}{ccurate} and reliable navigation is a fundamental requirement for 
autonomous mobile platforms operating in outdoor environments. 
Inertial navigation systems (INS), driven by inertial measurement 
units (IMUs), provide high-rate, self-contained state estimates but 
suffer from unbounded error growth due to the integration of 
accelerometer and gyroscope measurements, which are corrupted by 
noise and bias~\cite{chatfield1997fundamentals}. Global 
navigation satellite systems (GNSS) provide global position 
measurements that bound this drift, and their fusion with INS through 
an extended Kalman filter (EKF) has become the standard approach for navigation across ground vehicles, aerial platforms, and 
marine systems~\cite{titterton2004strapdown}. Despite its widespread adoption, loosely coupled INS/GNSS fusion is 
limited by the accuracy of the GNSS measurements, which 
typically achieves two to five meters of positional accuracy, far 
from the centimeter-level precision offered by real-time kinematics (RTK) corrected GNSS~\cite{teunissen2017springer}. This 
discrepancy manifests as a systematic position offset in the fused 
solution, which directly motivates the approach proposed in this paper.
\\ \noindent
In some cases, the navigation performance recorded during a mission 
is insufficient for post-processing analysis, for example in 
surveying and mapping~\cite{shin2005backward}, airborne 
gravimetry~\cite{kwon2001new}, and direct georeferencing of 
unmanned aerial vehicle imagery~\cite{chiang2012development}. To this end, prior work has proposed exploiting the full measurement history, incorporating both past and future observations, to refine the navigation solution. Fixed-interval smoothing algorithms, such as 
the Rauch--Tung--Striebel smoother (RTSS)~\cite{rauch1965maximum} 
and the two-filter smoother (TFS)~\cite{fraser1969optimum}, represent 
the theoretically optimal solution for post-processing navigation 
under Gaussian assumptions. Beyond refining positioning performance, 
these algorithms have also been applied to address challenges arising 
in real-time scenarios, such as GNSS-denied environments, where they 
demonstrate the ability to compensate for EKF 
divergence~\cite{liu2010two,zhang2017new}, and in conditions where 
the GNSS signal is severely disturbed or 
degraded~\cite{yin2023sensor,zhao2025smoothing}.
\\ \noindent
With recent advancements in hardware capabilities and computational 
power, data-driven approaches such as machine and deep learning have 
begun to be integrated into the filtering mechanism to enhance 
navigation performance. Specifically, in inertial-aided navigation, 
data-driven methods have been investigated either as end-to-end 
approaches or as replacements for specific blocks within the filtering 
framework~\cite{cohen2024inertial,chen2024deep}. Replacing a specific block within 
the framework, while maintaining the reliability and interpretability 
of the navigation solution rather than adopting a black-box approach, 
has shown great promise. Adaptive approaches have been suggested to estimate the 
process and measurement noise covariance 
matrices~\cite{wu2020predicting,cohen2025adaptive,levy2026adaptive,versano2026hybrid}. 
Data-driven methods have also been used to compensate for partial or 
intermittent measurement availability and subsequently integrated 
into the filter~\cite{fang2020lstm,cohen2024seamless}. These efforts have also extended to the post-processing smoothing 
stage. In~\cite{ni2022rtsnet,revach2023rtsnet}, a hybrid deep 
learning and Kalman smoothing network called RTSNet was proposed. 
However, this approach has not been evaluated on INS/GNSS fusion and 
is specifically designed for partially known state-space models, 
which is not the case in INS/GNSS fusion where the process and 
measurement noise covariance matrices can be determined from sensor 
specifications. Furthermore, RTSNet is designed to learn the forward 
and backward filter gains jointly, which does not explicitly enforce 
the minimum-variance optimality criterion~\cite{bar2001estimation}, 
nor does it address the systematic position bias that arises from raw GNSS measurements.
\\ \noindent
To address these limitations, this paper proposes BLENDS,
a Bayesian learning-enhanced deep smoothing framework for inertial-aided navigation.
BLENDS is a data-driven 
post-processing framework that augments the classical TFS with a transformer-based neural network. Unlike existing 
approaches, BLENDS operates directly within the Bayesian smoothing 
framework rather than replacing it, preserving the theoretical 
guarantees of the TFS while learning to correct the systematic 
position bias that classical smoothers are unable to address. The 
network learns to modify the filter covariance matrices and apply an 
additive correction to the smoothed error-state, guided by a 
Bayesian-consistent loss that jointly supervises the smoothed mean 
and covariance. BLENDS is evaluated on two real-world 
datasets spanning substantially different platforms, a mobile 
robot and a quadrotor. The main contributions of this paper are:
\begin{enumerate}
    \item A learning-enhanced two-filter smoother that corrects 
    systematic GNSS position bias within the Bayesian framework, 
    without compromising the theoretical optimality of the smoothed 
    solution.
    \item A Bayesian-consistent loss function that jointly supervises 
    the smoothed mean and covariance, enforcing minimum-variance 
    estimates while maintaining statistical consistency.
    \item A training strategy that constrains the network outputs to physically meaningful ranges, ensuring stable convergence across different platforms and dynamics.
\end{enumerate}

\noindent
The proposed method is validated on two real-world datasets: a 
55-minute mobile robot dataset covering six distinct motion patterns, and the INSANE quadrotor 
benchmark~\cite{brommer2022insane}. Across all unseen test 
trajectories, BLENDS achieves horizontal position improvements of up 
to 63\% over the forward EKF, while maintaining covariance 
consistency across both platforms.
\\ \noindent
The remainder of this paper is organized as follows. 
Section~\ref{sec:system} presents the system modeling and estimation 
framework, including the forward error-state EKF, the backward 
information filter, the TFS, and the RTSS. 
Section~\ref{sec:motivation} motivates the proposed approach through 
simulation, demonstrating the fundamental limitation of classical 
smoothers in the presence of biased GNSS measurements. 
Section~\ref{sec:proposed} describes the BLENDS framework, the 
network architecture, and the Bayesian-consistent loss. 
Section~\ref{sec:results} presents the datasets, evaluation metrics, 
implementation details, experimental results, and discussion. 
Section~\ref{sec:conclusion} concludes the paper.
\section{System Modeling and Estimation Framework}\label{sec:system}
\noindent
The following subsections introduce smoothing techniques for loosely coupled INS/GNSS sensor fusion. State smoothing aims to estimate the marginal posterior distribution of the state at time step $k$ using the entire set of measurements available up to a later time $T$, where $T > k$. In contrast to filtering, which relies only on measurements up to the current time, smoothing incorporates future observations in order to refine past state estimates. Fixed–interval smoothing is commonly implemented using either the TFS, also referred to as forward-backward smoothing (FBS), or the RTSS. The forward–backward approach provides the most straightforward implementation of fixed-interval smoothing. In contrast, the RTSS is conceptually more challenging but computationally more efficient than the TFS, while remaining theoretically equivalent in the linear case~\cite{simon2006optimal}.
\subsection{Forward Error-State Extended Kalman Filter}\label{EKF}
\noindent
The system state vector $\boldsymbol{x}$ is decomposed into a nominal state vector $\boldsymbol{x}^{NOM}$ 
and a perturbation error state vector $\delta\boldsymbol{x}$, as:
\begin{equation}\label{eqn:error_state}
\boldsymbol{x} = \boldsymbol{x}^{NOM} - \delta\boldsymbol{x}.
\end{equation}
The forward error-state vector is defined as follows:
\begin{equation}\label{eqn:dx}
\delta\boldsymbol{x}^n_{f} = \left[(\delta\boldsymbol{p}^{n})^{T} \quad (\delta\boldsymbol{v}^{n})^{T} \quad (\delta\boldsymbol{\epsilon}^{n})^{T} \quad \delta\boldsymbol{b}_{a}^{T} \quad \delta\boldsymbol{b}_{g}^{T}\right]^{T}
\end{equation}
where the subscript $f$ denotes the forward filter. Here, $\delta\boldsymbol{p}^{n} \in \mathbb{R}^{3\times1}$, $\delta\boldsymbol{v}^{n} \in \mathbb{R}^{3\times1}$, $\delta\boldsymbol{\epsilon}^{n} \in \mathbb{R}^{3\times1}$, $\delta\boldsymbol{b}_{a} \in \mathbb{R}^{3\times1}$, and $\delta\boldsymbol{b}_{g} \in \mathbb{R}^{3\times1}$ denote the position error expressed in the navigation frame (e.g., latitude, longitude, and altitude), the velocity error expressed in the navigation frame, the misalignment angles between predicted and true navigation frame, the accelerometer bias error, and the gyroscope bias error, respectively~\cite{farrell2008aided}. The governing state-space differential equation is:
\begin{equation}\label{eqn:f}
\delta\dot{\boldsymbol{x}}_{f} = \mathbf{F}\,\delta\boldsymbol{x}_{f} + \mathbf{G}\,\boldsymbol{\omega}
\end{equation}
where the system matrix $\mathbf{F} \in \mathbb{R}^{15\times15}$ is derived by linearization of the inertial 
equations of motion in the navigation frame, and the noise distribution matrix 
$\mathbf{G} \in \mathbb{R}^{15\times12}$ is constructed accordingly. The process noise vector, assuming 
the biases evolve as random walks, is defined as:
\begin{equation}\label{eqn:noise}
\boldsymbol{\omega} = \left[\boldsymbol{\omega}_{a}^{T} \quad \boldsymbol{\omega}_{g}^{T} \quad 
\boldsymbol{\omega}_{a_{b}}^{T} \quad \boldsymbol{\omega}_{g_{b}}^{T}\right]^{T} \in \mathbb{R}^{12\times1}
\end{equation}
where $\boldsymbol{\omega}_{a} \in \mathbb{R}^{3\times1}$ and $\boldsymbol{\omega}_{g} \in \mathbb{R}^{3\times1}$ 
represent the additive noise for the accelerometer and gyroscope, respectively, and 
$\boldsymbol{\omega}_{a_{b}} \in \mathbb{R}^{3\times1}$ and $\boldsymbol{\omega}_{g_{b}} \in \mathbb{R}^{3\times1}$ 
denote the random walk driving noise for the accelerometer and gyroscope biases, respectively, such that:
\begin{equation}\label{eqn:biasdot}
\dot{\boldsymbol{b}}_{a} = \boldsymbol{\omega}_{a_{b}}, \quad \dot{\boldsymbol{b}}_{g} = \boldsymbol{\omega}_{g_{b}}
\end{equation}
In discrete form, \eqref{eqn:f} is discretized for the prediction phase of the Kalman filter as:
\begin{equation}\label{eqn:pred_state}
\delta\boldsymbol{x}^{-}_{f,k} = \mathbf{\Phi}_{k-1}\,\delta\boldsymbol{x}^{+}_{f,k-1}
\end{equation}
\begin{equation}\label{eqn:pred}
\mathbf{P}^{-}_{f,k} = \mathbf{\Phi}_{k-1}\mathbf{P}^{+}_{f,k-1}\mathbf{\Phi}_{k-1}^{T} + \mathbf{Q}_{k-1}
\end{equation}
where $\mathbf{P} \in \mathbb{R}^{15\times15}$ is the error-state covariance matrix, and the superscripts 
$(\cdot)^{-}$ and $(\cdot)^{+}$ denote the a \textit{priori} (predicted) and a \textit{posteriori} (updated) estimates, 
respectively. The state transition matrix $\mathbf{\Phi}$ is approximated using a first-order 
power-series expansion of the system matrix over the time interval $dt$, where $\mathbf{I}_{15} \in 
\mathbb{R}^{15\times15}$ denotes the identity matrix:
\begin{equation}\label{eqn:transition_matrix}
\mathbf{\Phi} = \mathbf{I}_{15} + \mathbf{F}\,dt
\end{equation}
and the approximated discretized process noise covariance matrix of $\mathbf{Q} = \mathbb{E}[\boldsymbol{\omega}\boldsymbol{\omega}^{T}]$ 
is given by:
\begin{equation}\label{eqn:Q_dis_MB}
\mathbf{Q}_{k-1} = \frac{1}{2}\,\mathbf{G}_{k-1}\,\mathbf{Q}\,\mathbf{G}_{k-1}^{T}\,dt
\end{equation}
In a loosely coupled INS/GNSS fusion, a position update is performed using the following 
measurement model:
\begin{equation}\label{eqn:mes}
\delta\boldsymbol{z}_{k} = \mathbf{H}_k\,\delta\boldsymbol{x}^{-}_{f,k} + \boldsymbol{v}
\end{equation}
where $\delta\boldsymbol{z}_{k}$ denotes the measurement residual between the INS solution and 
the GNSS position measurement, given by:
\begin{equation}\label{eqn:meas_diff}
\delta\boldsymbol{z}_{k} = \boldsymbol{z}^{INS}_{k} - \boldsymbol{z}^{GNSS}_{k}
\end{equation}
and $\boldsymbol{v}$ is the measurement noise vector. The measurement matrix 
$\mathbf{H}_k \in \mathbb{R}^{3\times15}$ maps the error-state vector to the position 
measurement residual, and is defined as:
\begin{equation}\label{eqn:H}
\mathbf{H}_k = \left[\mathbf{I}_{3} \quad \mathbf{0}_{3\times12}\right]
\end{equation}
where $\mathbf{I}_{3} \in \mathbb{R}^{3\times3}$ is the identity matrix and 
$\mathbf{0}_{3\times12} \in \mathbb{R}^{3\times12}$ is a zero matrix. The measurement noise 
covariance matrix $\mathbf{R}_{k} = \mathbb{E}[\boldsymbol{v}\boldsymbol{v}^{T}] \in \mathbb{R}^{3\times3}$ 
represents the uncertainty in the GNSS position measurements. The Kalman update steps are 
then carried out as:
\begin{equation}\label{eqn:gain}
\mathbf{K}_{k} = \mathbf{P}^{-}_{f,k}\mathbf{H}^{T}_{k}\left(\mathbf{H}_{k}\mathbf{P}^{-}_{f,k}\mathbf{H}^{T}_{k} + \mathbf{R}_{k}\right)^{-1}
\end{equation}
\begin{equation}\label{eqn:posteriori}
\mathbf{P}^{+}_{f,k} = \left(\mathbf{I} - \mathbf{K}_{k}\mathbf{H}_{k}\right)\mathbf{P}^{-}_{f,k}
\end{equation}
\begin{equation}\label{eqn:state_update}
\delta\boldsymbol{x}^{+}_{f,k} = \delta\boldsymbol{x}^{-}_{f,k} + \mathbf{K}_{k}\left(\delta\boldsymbol{z}_{k} - \mathbf{H}_k\,\delta\boldsymbol{x}^{-}_{f,k}\right).
\end{equation}
Lastly, the updated error-state vector is reset to zero, ensuring that the nominal state 
is realigned with the updated state:
\begin{equation}\label{eqn:reset}
\delta\boldsymbol{x}^{-}_{f,k+1} = \boldsymbol{0}
\end{equation}
The detailed derivations of the error-state EKF for INS/GNSS fusion presented in this 
paper follow the formulations outlined in~\cite{groves2015principles}.
\subsection{Backward Information Filter}
\noindent
The backward filter operates in reverse time, starting at epoch $k = n$. The backward 
estimates must remain independent of the forward filter estimates, therefore, none of 
the information used in the forward filter is permitted to be used in the backward filter. 
As a consequence, the backward covariance matrix $\mathbf{P}_{b,k} \in \mathbb{R}^{15\times15}$, 
where the subscript $(\cdot)_{b}$ denotes backward run, is initialized at $k = n$ 
to represent infinite uncertainty, i.e., no prior information, as~\cite{sarkka2023bayesian}:
\begin{equation}\label{eqn:infoCov}
\mathbf{P}_{b,n}^{-} = \infty, \quad \mathbf{\mathcal{I}}_{b,n}^{-} = \left(\mathbf{P}_{b,n}^{-}\right)^{-1} = \mathbf{0}
\end{equation}
where $\mathbf{\mathcal{I}}_{b,k} \in \mathbb{R}^{15\times15}$ is the backward information 
matrix, defined as the inverse of the backward covariance matrix. This initialization 
describes a state of complete ignorance, ensuring independence between the forward and 
backward filters. The backward information vector is then introduced as:
\begin{equation}\label{eqn:info_vec}
\delta\boldsymbol{s}_{b,k} = \mathbf{\mathcal{I}}_{b,k}\,\delta\boldsymbol{x}_{b,k}
\end{equation}
where $\delta\boldsymbol{x}_{b,k} \in \mathbb{R}^{15\times1}$ is the backward error-state 
vector and $\delta\boldsymbol{s}_{b,k} \in \mathbb{R}^{15\times1}$ is the backward information 
vector. In accordance with \eqref{eqn:infoCov}, this is initialized as 
$\delta\boldsymbol{s}_{b,n} = \boldsymbol{0}$, enforcing independence between the forward 
and backward filters at the starting epoch.
\\ \noindent
The discrete error-state information filter is propagated backwards in time as follows:
\begin{equation}\label{eqn:info_pred_state}
\delta\boldsymbol{x}^{-}_{b,k-1} = \mathbf{\Phi}_{k-1}^{-1}\,\delta\boldsymbol{x}^{+}_{b,k}
\end{equation}
where $\mathbf{\Phi}_{k-1}^{-1}$ is well defined for all $k$, since the state transition 
matrix $\mathbf{\Phi}_{k-1}$ is derived from the matrix exponential and is therefore always 
invertible. The backward covariance is propagated as:
\begin{equation}\label{eqn:info_pred_back}
\mathbf{P}^{-}_{b,k-1} = \mathbf{\Phi}^{-1}_{k-1}\left(\mathbf{P}^{+}_{b,k} + \mathbf{Q}_{b,k}\right)\mathbf{\Phi}^{-T}_{k-1}
\end{equation}
where the backward discrete process noise covariance matrix $\mathbf{Q}_{b,k} \in \mathbb{R}^{15\times15}$ 
is approximated as:
\begin{equation}\label{eqn:Qb}
\mathbf{Q}_{b,k} = \frac{1}{2}\,\mathbf{\Phi}^{-1}_{k}\,\mathbf{G}_{k}\,\mathbf{Q}\,\mathbf{G}_{k}^{T}\,\mathbf{\Phi}^{-T}_{k}\,dt
\end{equation}
The update phase of the backward filter is carried out as follows:
\begin{equation}\label{eqn:Iplus}
\mathbf{\mathcal{I}}^{+}_{b,k} = \mathbf{\mathcal{I}}^{-}_{b,k} + \mathbf{H}^{T}_{k}\mathbf{R}^{-1}_{k}\mathbf{H}_{k}
\end{equation}
\begin{equation}\label{eqn:splus}
\delta\boldsymbol{s}^{+}_{b,k} = \mathbf{\mathcal{I}}^{-}_{b,k}\,\delta\boldsymbol{x}^{-}_{f,k} + \mathbf{H}^{T}_{k}\mathbf{R}^{-1}_{k}\delta\boldsymbol{z}_{k} + \mathbf{\mathcal{I}}^{-}_{b,k}\,\delta\boldsymbol{x}^{-}_{b,k}.
\end{equation}
A more detailed derivation of \eqref{eqn:infoCov}--\eqref{eqn:splus} can be found 
in~\cite{liu2010two}.
\subsection{Two-Filter Smoother}\label{TFS}
\noindent
The TFS combines the forward error-state EKF and a backward information filter to 
produce a smoothed state estimate at each epoch $k$. Suppose the forward and backward 
state estimates are combined into a smoothed estimate as:
\begin{equation}\label{eqn:tfs_full_fuse}
\boldsymbol{x}_{s,k} = \mathbf{K}_{f}\,\boldsymbol{x}_{f,k} + \mathbf{K}_{b}\,\boldsymbol{x}_{b,k}
\end{equation}
where $\mathbf{K}_{f} \in \mathbb{R}^{15\times15}$ and $\mathbf{K}_{b} \in \mathbb{R}^{15\times15}$ 
are the forward and backward fusion gain matrices, respectively. Since both estimates are 
unbiased, and to ensure the fused estimate $\boldsymbol{x}_{s,k}$ is also unbiased, the 
constraint $\mathbf{K}_{f} + \mathbf{K}_{b} = \mathbf{I}$ is enforced, giving:
\begin{equation}\label{eqn:tfs_full_fuse2}
\boldsymbol{x}_{s,k} = \mathbf{K}_{f}\,\boldsymbol{x}_{f,k} + \left(\mathbf{I} - \mathbf{K}_{f}\right)\boldsymbol{x}_{b,k}
\end{equation}
The covariance of the smoothed estimate can then be derived as follows:
\begin{equation}\label{eqn:tfs_cov_deriv}
\begin{aligned}
\mathbf{P}_{s,k} &= \mathbb{E}\left[\left(\boldsymbol{x}^{NOM}_{k} - \boldsymbol{x}_{s,k}\right)\left(\boldsymbol{x}^{NOM}_{k} - \boldsymbol{x}_{s,k}\right)^{T}\right] \\
&= \mathbb{E}\Big\{\mathbf{K}_{f}\left(\delta\boldsymbol{x}_{f,k}\delta\boldsymbol{x}_{f,k}^{T} + \delta\boldsymbol{x}_{b,k}\delta\boldsymbol{x}_{b,k}^{T}\right)\mathbf{K}_{f}^{T} \\
&\qquad + \delta\boldsymbol{x}_{b,k}\delta\boldsymbol{x}_{b,k}^{T} - \mathbf{K}_{f}\delta\boldsymbol{x}_{b,k}\delta\boldsymbol{x}_{b,k}^{T} \\
&\qquad - \delta\boldsymbol{x}_{b,k}\delta\boldsymbol{x}_{b,k}^{T}\mathbf{K}_{f}^{T}\Big\}
\end{aligned}
\end{equation}
where $\delta\boldsymbol{x}_{f,k} = \boldsymbol{x}^{NOM}_{k} - \boldsymbol{x}_{f,k}$ and 
$\delta\boldsymbol{x}_{b,k} = \boldsymbol{x}^{NOM}_{k} - \boldsymbol{x}_{b,k}$ are the forward 
and backward error-state vectors as defined in \eqref{eqn:error_state}, and where the 
cross-covariance term vanishes due to the independence of the forward and backward 
estimates, i.e., $\mathbb{E}\left[\delta\boldsymbol{x}_{f,k}\,\delta\boldsymbol{x}^{T}_{b,k}\right] = \mathbf{0}$. 
Defining $\mathbf{P}_{f,k} = \mathbb{E}[\delta\boldsymbol{x}_{f,k}\,\delta\boldsymbol{x}^{T}_{f,k}]$ 
and $\mathbf{P}_{b,k} = \mathbb{E}[\delta\boldsymbol{x}_{b,k}\,\delta\boldsymbol{x}^{T}_{b,k}]$ 
as the forward and backward error-state covariance matrices, respectively, a minimum 
variance estimate is obtained by taking the derivative of the trace of $\mathbf{P}_{s,k}$ 
with respect to $\mathbf{K}_{f}$, setting it to zero and
solving for the optimal $\mathbf{K}_{f}$:
\begin{equation}\label{eqn:tfs_Kf}
\mathbf{K}_{f} = \mathbf{P}_{b,k}\left(\mathbf{P}_{f,k} + \mathbf{P}_{b,k}\right)^{-1}
\end{equation}
and consequently:
\begin{equation}\label{eqn:tfs_Kb}
\mathbf{K}_{b} = \mathbf{I} - \mathbf{K}_{f} = \mathbf{P}_{f,k}\left(\mathbf{P}_{f,k} + \mathbf{P}_{b,k}\right)^{-1}
\end{equation}
Substituting \eqref{eqn:tfs_Kf} back into \eqref{eqn:tfs_cov_deriv}, after some algebra, 
the smoothed covariance simplifies to:
\begin{equation}\label{eqn:tfs_cov}
\mathbf{P}_{s,k} = \left(\mathbf{P}^{-1}_{f,k} + \mathbf{P}^{-1}_{b,k}\right)^{-1} = \left(\mathbf{\mathcal{I}}_{f,k} + \mathbf{\mathcal{I}}_{b,k}\right)^{-1}
\end{equation}
The result in \eqref{eqn:tfs_cov} shows that the information from both filters is 
additively combined, which is a direct consequence of the independence between the 
forward and backward estimates. Ultimately, the smoothed full state estimate in 
\eqref{eqn:tfs_full_fuse} becomes:
\begin{equation}\label{eqn:tfs_state_full}
\boldsymbol{x}_{s,k} = \mathbf{P}_{s,k}\left(\mathbf{\mathcal{I}}_{f,k}\,\boldsymbol{x}_{f,k} + \mathbf{\mathcal{I}}_{b,k}\,\boldsymbol{x}_{b,k}\right)
\end{equation}
\begin{proposition}\label{prop:tfs_equivalence}
TFS fusion can be performed on the error-state alone, yielding the same smoothed 
full state after correction of the nominal state:
\begin{equation}\label{eqn:tfs_error_fuse}
\delta\boldsymbol{x}_{s,k} = \mathbf{P}_{s,k}\left(\mathbf{\mathcal{I}}_{f,k}\,\delta\boldsymbol{x}_{f,k} + \mathbf{\mathcal{I}}_{b,k}\,\delta\boldsymbol{x}_{b,k}\right)
\end{equation}
\end{proposition}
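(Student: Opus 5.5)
The plan is to exploit the affine relation \eqref{eqn:error_state} between the full state and the error state, with a common nominal trajectory $\boldsymbol{x}^{NOM}_{k}$ shared by the forward and backward estimates at epoch $k$. Write $\boldsymbol{x}_{f,k} = \boldsymbol{x}^{NOM}_{k} - \delta\boldsymbol{x}_{f,k}$ and $\boldsymbol{x}_{b,k} = \boldsymbol{x}^{NOM}_{k} - \delta\boldsymbol{x}_{b,k}$, exactly as the error definitions used below \eqref{eqn:tfs_cov_deriv}. Substituting these into the full-state fusion \eqref{eqn:tfs_state_full} gives
\begin{equation*}
\boldsymbol{x}_{s,k} = \mathbf{P}_{s,k}\left(\mathbf{\mathcal{I}}_{f,k} + \mathbf{\mathcal{I}}_{b,k}\right)\boldsymbol{x}^{NOM}_{k} - \mathbf{P}_{s,k}\left(\mathbf{\mathcal{I}}_{f,k}\,\delta\boldsymbol{x}_{f,k} + \mathbf{\mathcal{I}}_{b,k}\,\delta\boldsymbol{x}_{b,k}\right).
\end{equation*}

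The key step is then \eqref{eqn:tfs_cov}, which states $\mathbf{P}_{s,k} = (\mathbf{\mathcal{I}}_{f,k} + \mathbf{\mathcal{I}}_{b,k})^{-1}$. This makes the first factor collapse to the identity, so the nominal state passes through unchanged. Equivalently, the fusion gains satisfy $\mathbf{K}_{f} + \mathbf{K}_{b} = \mathbf{I}$ by \eqref{eqn:tfs_Kf}--\eqref{eqn:tfs_Kb}. We therefore obtain $\boldsymbol{x}_{s,k} = \boldsymbol{x}^{NOM}_{k} - \delta\boldsymbol{x}_{s,k}$, with $\delta\boldsymbol{x}_{s,k}$ given precisely by \eqref{eqn:tfs_error_fuse}. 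Correcting the nominal state with the fused error state, in the same sign convention as \eqref{eqn:error_state}, reproduces the full-state smoother output. The same computation shows that the smoothed covariance is unchanged, because the error and full-state estimation errors differ only by the deterministic nominal term: $\boldsymbol{x}^{NOM}_k - \boldsymbol{x}_{s,k} = \delta\boldsymbol{x}_{s,k}$.

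The main obstacle is making sure both filters are expressed relative to the same nominal state at epoch $k$. This is delicate because the forward filter resets its error state \eqref{eqn:reset}, and the backward filter propagates its error state with its own linearization. I would handle this by fixing the nominal trajectory as the forward (reset) INS solution. I would then interpret $\delta\boldsymbol{x}_{b,k}$ as the backward estimate referred to that same nominal, as suggested by \eqref{eqn:splus}, which incorporates $\delta\boldsymbol{x}^{-}_{f,k}$. If the nominals differed, one would first re-express $\delta\boldsymbol{x}_{b,k}$ by adding the nominal difference. The identity above would still hold after this shift, since it is affine with weights summing to the identity. I would also note that the information matrices are invariant under this shift of reference, because they are covariances of errors.
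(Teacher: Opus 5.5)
Your proposal is correct and follows essentially the same route as the paper: substitute the error-state decomposition into the full-state fusion \eqref{eqn:tfs_state_full}, collect the nominal terms, and use $\mathbf{P}_{s,k}(\mathbf{\mathcal{I}}_{f,k}+\mathbf{\mathcal{I}}_{b,k})=\mathbf{I}$ from \eqref{eqn:tfs_cov} to reduce them to $\boldsymbol{x}^{NOM}_{k}$. Your remarks on referring both filters to a common nominal trajectory make explicit an assumption the paper uses implicitly, but the algebraic identity does not need them.
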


\begin{proof}
Recall the error-state decomposition from \eqref{eqn:error_state}:
\begin{equation}\label{eqn:tfs_decomp}
\boldsymbol{x}_{f,k} = \boldsymbol{x}^{NOM}_{k} - \delta\boldsymbol{x}_{f,k}, \quad 
\boldsymbol{x}_{b,k} = \boldsymbol{x}^{NOM}_{k} - \delta\boldsymbol{x}_{b,k}
\end{equation}
Substituting \eqref{eqn:tfs_decomp} into \eqref{eqn:tfs_state_full}:
\begin{align}\label{eqn:tfs_sub1}
\boldsymbol{x}_{s,k} = \mathbf{P}_{s,k}\big(&\mathbf{\mathcal{I}}_{f,k}(\boldsymbol{x}^{NOM}_{k} - \delta\boldsymbol{x}_{f,k}) \nonumber\\
&+ \mathbf{\mathcal{I}}_{b,k}(\boldsymbol{x}^{NOM}_{k} - \delta\boldsymbol{x}_{b,k})\big)
\end{align}
Expanding and collecting terms in $\boldsymbol{x}^{NOM}_{k}$:
\begin{align}\label{eqn:tfs_sub2}
\boldsymbol{x}_{s,k} &= \mathbf{P}_{s,k}\left(\mathbf{\mathcal{I}}_{f,k} + \mathbf{\mathcal{I}}_{b,k}\right)\boldsymbol{x}^{NOM}_{k} \nonumber\\
&\quad - \mathbf{P}_{s,k}\left(\mathbf{\mathcal{I}}_{f,k}\,\delta\boldsymbol{x}_{f,k} + \mathbf{\mathcal{I}}_{b,k}\,\delta\boldsymbol{x}_{b,k}\right)
\end{align}
From \eqref{eqn:tfs_cov}, $\mathbf{P}_{s,k}\left(\mathbf{\mathcal{I}}_{f,k} + 
\mathbf{\mathcal{I}}_{b,k}\right) = \mathbf{I}$, so the first term cancels to 
$\boldsymbol{x}^{NOM}_{k}$ and \eqref{eqn:tfs_sub2} simplifies to:
\begin{align}\label{eqn:tfs_sub3}
\boldsymbol{x}_{s,k} = \boldsymbol{x}^{NOM}_{k} - \mathbf{P}_{s,k}\left(\mathbf{\mathcal{I}}_{f,k}\,\delta\boldsymbol{x}_{f,k} + \mathbf{\mathcal{I}}_{b,k}\,\delta\boldsymbol{x}_{b,k}\right)
\end{align}
Applying the error-state correction from \eqref{eqn:error_state} using 
\eqref{eqn:tfs_error_fuse}:
\begin{align}\label{eqn:tfs_proof_final}
\boldsymbol{x}_{s,k} &= \boldsymbol{x}^{NOM}_{k} - \delta\boldsymbol{x}_{s,k} \nonumber\\
&= \boldsymbol{x}^{NOM}_{k} - \mathbf{P}_{s,k}\left(\mathbf{\mathcal{I}}_{f,k}\,\delta\boldsymbol{x}_{f,k} + \mathbf{\mathcal{I}}_{b,k}\,\delta\boldsymbol{x}_{b,k}\right)
\end{align}
Since \eqref{eqn:tfs_sub3} and \eqref{eqn:tfs_proof_final} are identical, it is 
confirmed that operating on the error-state via \eqref{eqn:tfs_error_fuse} and 
correcting the nominal state is sufficient to recover the optimal smoothed full 
navigation state.
\end{proof}
\noindent
The TFS therefore provides an optimal smoothed estimate over the entire trajectory by 
additively combining the information from the forward and backward filters at each epoch, 
without requiring storage of the full forward filter history as in the RTSS, making it 
particularly well-suited for post-processing navigation applications. A full derivation and theory of the TFS can be seen, for example, in~\cite{simon2006optimal}.
\subsection{Rauch--Tung--Striebel Smoother}\label{RTSS}
\noindent
The RTSS is a fixed-interval backward smoother that 
operates on the stored results of the forward Kalman filter pass described in the preceding subsection. Since all required variables, namely the a \textit{priori} and a \textit{posteriori} error-state estimates $\delta\boldsymbol{x}^{-}_{f,k}$ and $\delta\boldsymbol{x}^{+}_{f,k}$, 
and their associated covariance matrices $\mathbf{P}^{-}_{f,k}$ and $\mathbf{P}^{+}_{f,k}$, 
are saved during the forward pass, the RTSS requires no additional propagation steps, 
making it computationally efficient for post-processing applications. The RTSS equations 
are given by:
\begin{equation}\label{eqn:Ks}
\mathbf{K}_{s,k} = \mathbf{P}^{+}_{f,k}\,\mathbf{\Phi}_{k}\left(\mathbf{P}^{-}_{f,k+1}\right)^{-1}
\end{equation}
\begin{equation}\label{eqn:Ps}
\mathbf{P}_{s,k} = \mathbf{P}^{+}_{f,k} + \mathbf{K}_{s,k}\left[\mathbf{P}_{s,k+1} - \mathbf{P}^{-}_{f,k+1}\right]\mathbf{K}^{T}_{s,k}
\end{equation}
\begin{equation}\label{eqn:dxs}
\delta\boldsymbol{x}_{s,k} = \delta\boldsymbol{x}^{+}_{f,k} + \mathbf{K}_{s,k}\left[\delta\boldsymbol{x}_{s,k+1} - \delta\boldsymbol{x}^{-}_{f,k+1}\right]
\end{equation}
\begin{equation}\label{eqn:xs}
\boldsymbol{x}_{s,k} = \boldsymbol{x}^{NOM}_{f,k} - \delta\boldsymbol{x}_{s,k}
\end{equation}
where $\mathbf{K}_{s,k} \in \mathbb{R}^{15\times15}$ is the smoother gain matrix. The smoother is initialized at the last epoch $k = n$ 
using the forward filter's final estimates, i.e., $\mathbf{P}_{s,n} = \mathbf{P}^{+}_{f,n}$ 
and $\delta\boldsymbol{x}_{s,n} = \delta\boldsymbol{x}^{+}_{f,n}$.
\section{Motivation}\label{sec:motivation}
\noindent
In INS/GNSS navigation, the fused solution typically achieves $2-5$ meters positional accuracy, whereas GNSS RTK provides centimeter-level accuracy. The accuracy discrepancy between the two systems manifests primarily as a systematic position offset rather than zero-mean noise~\cite{falco2017loose}. Since smoothing algorithms operate by reducing stochastic noise, they are fundamentally unable to correct such an offset, which motivates the problem addressed in this work.
\begin{figure}[!h]
    \centering
    \includegraphics[width=\columnwidth]{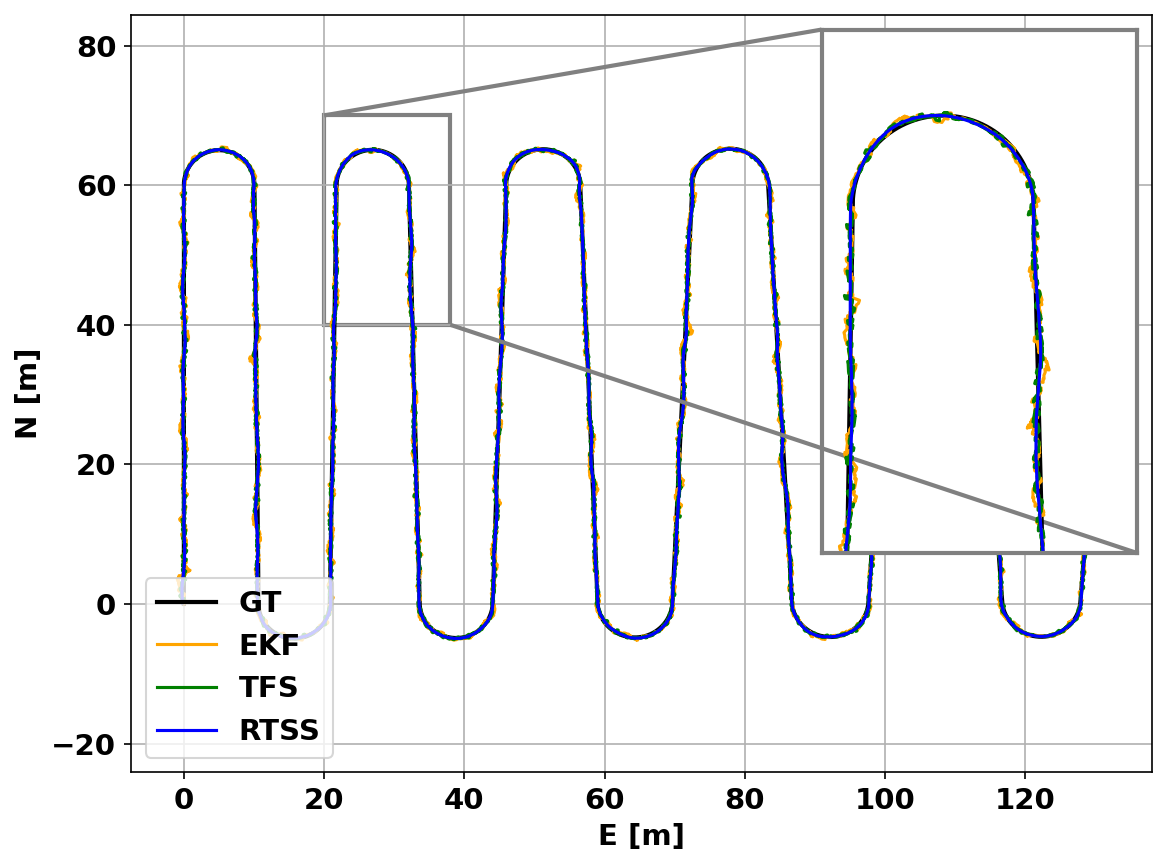}
    \caption{Estimated trajectory with zero-mean GNSS noise 
    ($\mu = 0$ [m], $\sigma = 0.5$ [m]). All estimators closely 
    follow the GT, confirming standard smoothing performance 
    under unbiased measurements.}
    \label{fig:traj_0}
\end{figure}
\noindent
To illustrate the problem, a simulation of an INS/GNSS loosely-coupled system is 
conducted over a 400-second lawnmower trajectory. The inertial data is corrupted 
with additive white noise equivalent to a consumer-grade micro-electro-mechanical system (MEMS) IMU, with standard 
deviations of $0.0316$ [rad/s] and $32.2$ [mg] for the gyroscope and accelerometer, 
respectively. The INS, operating in 100\,Hz, is then updated by GNSS position measurements, in 10\,Hz. We examine three different noise configurations: 
\begin{enumerate}
    \item Zero-mean Gaussian noise with a standard deviation of $0.5$ [m].
    \item Gaussian noise with a mean of $1.5$ [m] and a standard deviation of $0.5$ [m].
    \item Gaussian noise with a mean of $3$ [m] and a standard deviation of $0.5$ [m].
\end{enumerate}
\noindent
This allows the effect of a systematic position bias on the smoothed 
solution to be isolated and clearly demonstrated.
\\ \noindent
Figs.~\ref{fig:traj_0}, \ref{fig:traj_1_5}, and \ref{fig:traj_3} present the 
estimated trajectories, using EKF (Section~\ref{EKF}), TFS (Section~\ref{TFS}) and RTSS (Section~\ref{RTSS}), for the three noise configurations. In the zero-mean case 
(Fig.~\ref{fig:traj_0}), all three estimators, the EKF, TFS, and RTSS, closely 
follow the ground truth (GT) trajectory, confirming that standard smoothing effectively 
reduces noise when the GNSS measurements are unbiased. However, as a systematic 
bias is introduced, the behavior of the estimators shifts accordingly. In the $1.5$ [m] 
bias case (Fig.~\ref{fig:traj_1_5}), the EKF solution is visibly offset from the 
GT, while the RTSS and TFS smoothed solutions, despite reducing noise, inherit the same bias and remain offset. This effect is further 
amplified in the $3$ [m] bias case (Fig.~\ref{fig:traj_3}), where the offset 
between all estimators and the GT is clearly visible even in the 
full-scale trajectory. The enlarged insets confirm that smoothing improves the precision of the solution, reducing the spread around the mean. However, it does not correct the systematic offset, as the smoothed trajectories remain displaced from the GT by the same bias magnitude as the EKF.
\begin{figure}[!h]
    \centering
    \includegraphics[width=\columnwidth]{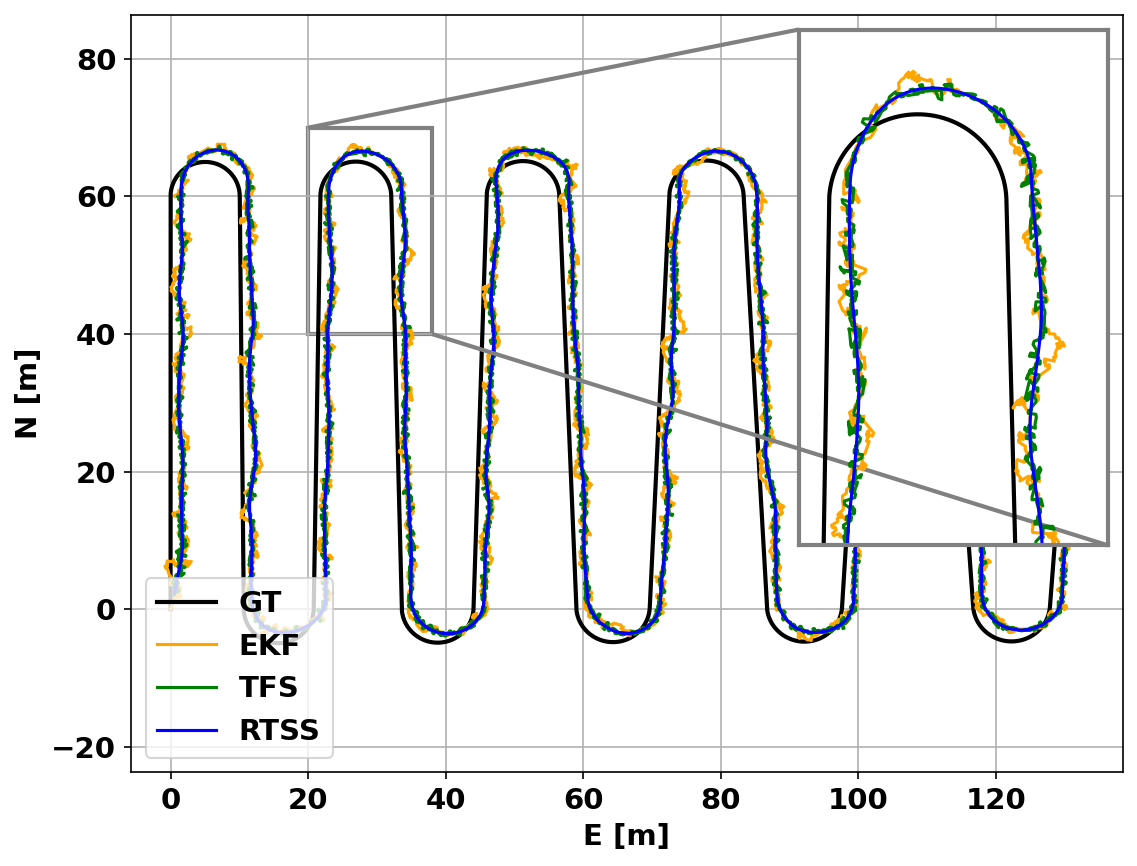}
    \caption{Estimated trajectory with biased GNSS noise 
    ($\mu = 1.5$ [m], $\sigma = 0.5$ [m]). The EKF, RTSS, and TFS 
    solutions all exhibit a systematic offset from the GT, 
    demonstrating that smoothing cannot correct a position bias.}
    \label{fig:traj_1_5}
\end{figure}
\begin{figure}[!h]
    \centering
    \includegraphics[width=\columnwidth]{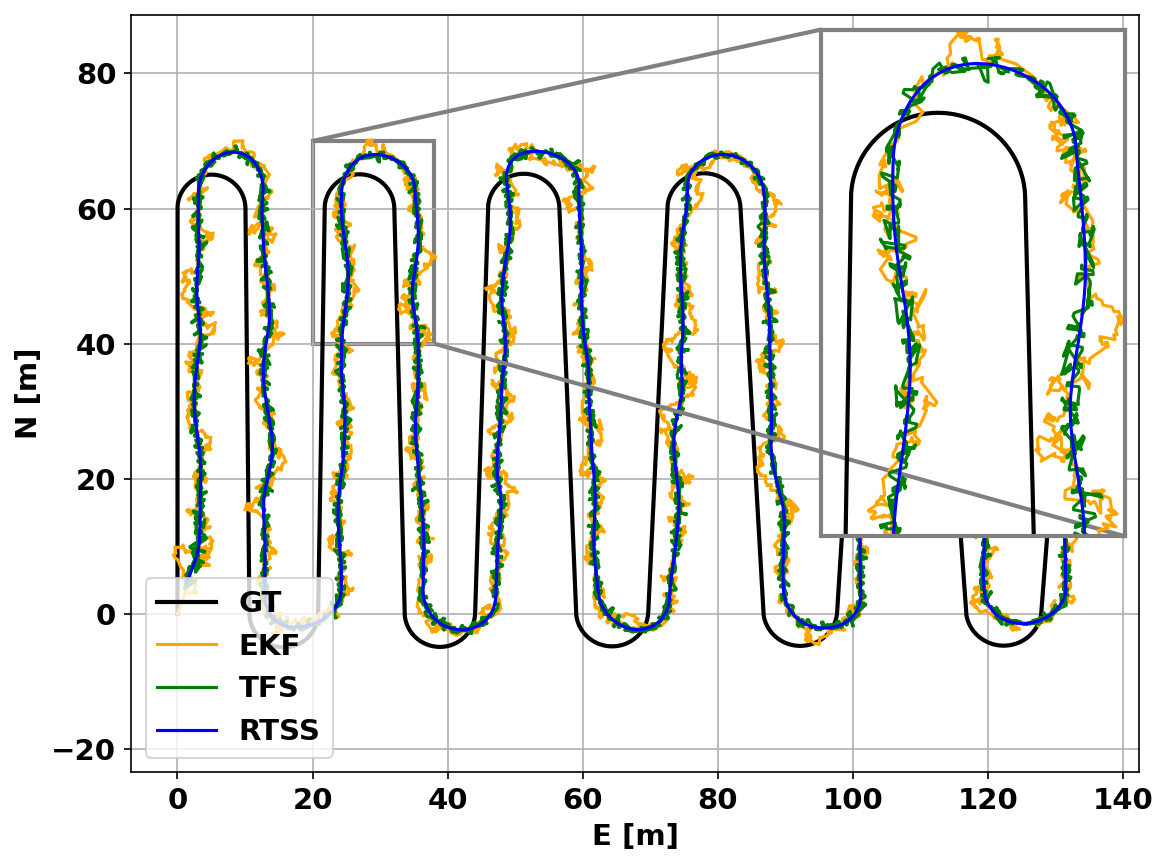}
    \caption{Estimated trajectory with biased GNSS noise 
    ($\mu = 3$ [m], $\sigma = 0.5$ [m]). The systematic offset 
    is clearly amplified relative to Fig.~\ref{fig:traj_1_5}, 
    further confirming that the bias is inherited by the smoothed 
    solutions regardless of the smoothing algorithm used.}
    \label{fig:traj_3}
\end{figure}

\noindent
The above demonstrates that conventional smoothing algorithms are fundamentally limited in the presence of biased GNSS measurements, as they inherit the systematic offset from the filter and are unable to correct it in the backward pass. Ultimately, this implies that the smoothing framework lacks sufficient information to bridge the gap between ordinary GNSS positioning accuracy and RTK-level precision. This raises the central question addressed in this paper: can neural-aided approaches be incorporated into the smoothing mechanism to improve navigation accuracy while remaining consistent with the theoretical foundations of Bayesian estimation?
\section{Proposed Approach}\label{sec:proposed}
\noindent
To improve navigation accuracy through a software-only implementation, we propose a neural-aided approach termed BLENDS, Bayesian learning-enhanced navigation with deep smoothing. The proposed method incorporates a deep learning component directly into the TFS framework, where modification matrices are learned to adjust the forward and backward estimated covariances. The modified covariances, together with an additive correction term, are then used to compute the smoothed error-state and its associated covariance. Consequently, BLENDS has the potential to correct the systematic bias that conventional smoothing algorithms are unable to address.
\subsection{BLENDS Framework}
\noindent
The BLENDS framework augments the classical TFS by introducing two learned quantities per time step: covariance modification matrices $\mathbf{D}_{f,k}$ and $\mathbf{D}_{b,k}$, and an additive correction vector $\boldsymbol{c}_k$, produced by a neural network $\mathcal{F}_{\boldsymbol{\theta}}$. Given these outputs, the forward and backward TFS covariance matrices are computed as:
\begin{equation}
    \tilde{\mathbf{P}}_{f,k} = \mathbf{D}_{f,k}\,\mathbf{P}_{f,k}\,
    \mathbf{D}_{f,k}^T, \quad
    \tilde{\mathbf{P}}_{b,k} = \mathbf{D}_{b,k}\,\mathbf{P}_{b,k}\,
    \mathbf{D}_{b,k}^T.
    \label{eqn:Pscaled}
\end{equation}
Although modified, the forward and backward covariance matrices~\eqref{eqn:Pscaled} preserve their essential positive semi-definite property. Since 
$\mathbf{P}^+_{f,k} \succ 0$ by construction of the Kalman filter and 
$\mathbf{D}_{f,k}$ is full-rank, the congruence transformation property 
guarantees that $\tilde{\mathbf{P}}_{f,k} \succ 0$, and equivalently 
for $\tilde{\mathbf{P}}_{b,k}$. This ensures that the modified information 
matrices $\tilde{\mathbf{\mathcal{I}}}_{f,k} = \tilde{\mathbf{P}}^{-1}_{f,k}$ 
and $\tilde{\mathbf{\mathcal{I}}}_{b,k} = \tilde{\mathbf{P}}^{-1}_{b,k}$ 
remain well-defined throughout the entire trajectory. The modified covariance matrices~\eqref{eqn:Pscaled} are substituted into~\eqref{eqn:tfs_Kf}--\eqref{eqn:tfs_Kb} to compute the fusion gains, producing hybrid learned gains rather than the standard model-based gains of the classical TFS.
\\ \noindent
The additive correction term $\boldsymbol{c}_k$ provides the network 
with a direct mechanism to compensate for systematic biases that the 
covariance modification alone cannot address. It is bounded element-wise via 
a scaled hyperbolic tangent:
\begin{equation}
    \boldsymbol{c}_k = \tanh\!\left(\hat{\boldsymbol{c}}_k\right) 
    \odot \boldsymbol{m}_k
    \label{eqn:cor}
\end{equation}
where $\odot$ denotes element-wise multiplication and 
$\boldsymbol{m}_k \in \mathbb{R}^{15}$ is a time-varying bound vector 
that contracts during training via a ramp $\rho \in [0,1]$:
\begin{equation}
    \boldsymbol{m}_k = (1 - \rho)\,\boldsymbol{m}^{\text{wide}} + 
    \rho\,\boldsymbol{m}^{\text{base}},
    \label{eqn:ramp}
\end{equation}
$\boldsymbol{m}^{\text{wide}}$ and $\boldsymbol{m}^{\text{base}}$ 
define the initial and final per-state correction bounds, respectively, the ramp $\rho$ is defined as:
\begin{equation}
    \rho(e) = \left(\min\!\left(\max\!\left(\frac{e}{e_w}, 
    0\right), 1\right)\right)^p,
    \label{eqn:ramp_fn}
\end{equation}
$e$ is the current training epoch, $e_w$ is the warm-up 
duration, and $p$ controls the rate of contraction. At the beginning 
of training ($e \ll e_w$), $\rho \approx 0$ and the network is 
permitted to produce large corrections within 
$\boldsymbol{m}^{\text{wide}}$, allowing broad exploration of the 
solution space. As training progresses ($e \rightarrow e_w$), 
$\rho \rightarrow 1$ and the bounds contract to 
$\boldsymbol{m}^{\text{base}}$, encouraging the network to produce 
small, precise corrections. This strategy serves as a physically 
motivated constraint that prevents the network from producing 
corrections that violate the expected dynamic ranges of the navigation 
states, and ensures that the learned outputs remain physically 
meaningful throughout training.
\\ \noindent
The final BLENDS smoothed error-state is obtained by substituting the 
modified covariances into the TFS fusion~\eqref{eqn:tfs_error_fuse} and 
appending the learned correction:
\begin{equation}
    \delta\boldsymbol{x}^{\text{BLENDS}}_{s,k} = 
    \tilde{\mathbf{P}}_{s,k}\!\left(
    \tilde{\mathbf{\mathcal{I}}}_{f,k}\,\delta\boldsymbol{x}_{f,k} + 
    \delta\boldsymbol{s}_{b,k}\right) + \boldsymbol{c}_k
    \label{eqn:blends_output}
\end{equation}
The BLENDS smoothed covariance is defined as the second central moment 
of the smoothed error-state:
\begin{equation}
    \tilde{\mathbf{P}}_{s,k} = \mathbb{E}\!\left[
    \delta\boldsymbol{x}^{\text{BLENDS}}_{s,k}\,
    (\delta\boldsymbol{x}^{\text{BLENDS}}_{s,k})^T
    \right]
    \label{eqn:blends_cov_def}.
\end{equation}
Substituting~\eqref{eqn:blends_output} 
into~\eqref{eqn:blends_cov_def} and using the independence of the 
forward and backward filter errors, the BLENDS smoothed covariance 
expands as:
\begin{equation}
    \tilde{\mathbf{P}}_{s,k} = 
    \tilde{\mathbf{K}}_{f,k}\,\tilde{\mathbf{P}}_{f,k}\,
    \tilde{\mathbf{K}}_{f,k}^T + 
    \tilde{\mathbf{K}}_{b,k}\,\tilde{\mathbf{P}}_{b,k}\,
    \tilde{\mathbf{K}}_{b,k}^T + 
    \boldsymbol{c}_k\boldsymbol{c}_k^T
    \label{eqn:blends_cov}
\end{equation}
where the first two terms represent the propagated uncertainty from 
the forward and backward filters, respectively, and the outer product 
$\boldsymbol{c}_k\boldsymbol{c}_k^T$ is a deterministic matrix that 
captures the additional uncertainty introduced by the additive 
correction term. The complete BLENDS inference procedure is 
summarized in Algorithm~\ref{alg:blends}.

\begin{algorithm}[!h]
\caption{BLENDS: Inference Pipeline}
\label{alg:blends}
\begin{algorithmic}[1]
\Require IMU data $\{\boldsymbol{f}_k, \boldsymbol{\omega}_k\}_{k=1}^{N}$, 
         GNSS positions $\{\boldsymbol{z}_k\}_{k=1}^{N}$,
         trained network $\mathcal{F}_{\boldsymbol{\theta}}$
\Ensure  Smoothed navigation states 
         $\{\boldsymbol{x}^{\text{BLENDS}}_{s,k}\}_{k=1}^{N}$
\Statex \textbf{--- Forward Pass ---}
\For{$k = 1$ \textbf{to} $N$}
    \State Propagate INS mechanization equations
    \State Predict error-state and covariance via~\eqref{eqn:pred_state},~\eqref{eqn:pred}
    \If{GNSS measurement available}
        \State Update $\delta\boldsymbol{x}^+_{f,k}$, $\mathbf{P}^+_{f,k}$ 
               via~\eqref{eqn:gain}--\eqref{eqn:reset}
    \EndIf
    \State Store $\delta\boldsymbol{x}_{f,k}$, $\mathbf{P}_{f,k}$, 
           $\boldsymbol{x}^{\text{NOM}}_{f,k}$, $\mathbf{\Phi}_k$
\EndFor
\Statex \textbf{--- Backward Pass ---}
\State Initialize $\mathbf{\mathcal{I}}^-_{b,N} = \mathbf{0}$, 
       $\delta\boldsymbol{s}^+_{b,N} = \boldsymbol{0}$
\For{$k = N$ \textbf{down to} $1$}
    \State Update $\mathbf{\mathcal{I}}^+_{b,k}$, $\delta\boldsymbol{s}^+_{b,k}$ 
           via~\eqref{eqn:Iplus},~\eqref{eqn:splus}
    \State Propagate $\mathbf{\mathcal{I}}^-_{b,k-1}$, $\delta\boldsymbol{x}^-_{b,k-1}$ 
           via~\eqref{eqn:info_pred_back},~\eqref{eqn:info_pred_state}
    \State Store $\delta\boldsymbol{x}_{b,k}$, $\mathbf{P}_{b,k}$
\EndFor
\Statex \textbf{--- BLENDS Fusion ---}
\State Assemble input sequence $\{\boldsymbol{u}_k\}_{k=1}^{N}$ 
       via~\eqref{eqn:net_input}
\State Compute $\{\mathbf{D}_{f,k}, \mathbf{D}_{b,k}, \boldsymbol{c}_k\}_{k=1}^{N} 
       = \mathcal{F}_{\boldsymbol{\theta}}\!\left(\boldsymbol{u}_{1:N}\right)$
       via~\eqref{eqn:net_output}
\For{$k = 1$ \textbf{to} $N$}
    \State Scale covariances $\tilde{\mathbf{P}}_{f,k}$, $\tilde{\mathbf{P}}_{b,k}$ 
           via~\eqref{eqn:Pscaled}
    \State Compute fusion gains $\tilde{\mathbf{K}}_{f,k}$, $\tilde{\mathbf{K}}_{b,k}$ 
           via~\eqref{eqn:tfs_Kf},~\eqref{eqn:tfs_Kb}
    \State Compute smoothed error-state 
           $\delta\boldsymbol{x}^{\text{BLENDS}}_{s,k}$ via~\eqref{eqn:blends_output}
    \State Compute smoothed covariance 
           $\tilde{\mathbf{P}}_{s,k}$ via~\eqref{eqn:blends_cov}
    \State Recover full state 
           $\boldsymbol{x}^{\text{BLENDS}}_{s,k} = \boldsymbol{x}^{\text{NOM}}_{f,k} 
           - \delta\boldsymbol{x}^{\text{BLENDS}}_{s,k}$
\EndFor
\State \Return $\{\boldsymbol{x}^{\text{BLENDS}}_{s,k},\, 
       \tilde{\mathbf{P}}_{s,k}\}_{k=1}^{N}$
\end{algorithmic}
\end{algorithm}
\begin{figure*}[h!]
    \centering
    \includegraphics[width=\textwidth]{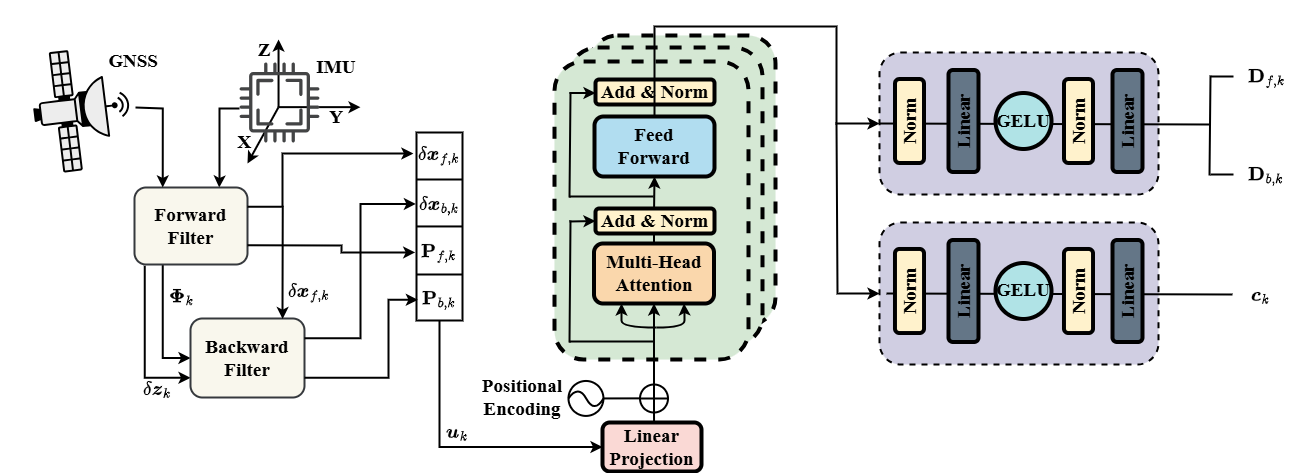}
    \caption{Overview of the BLENDS architecture. The forward EKF and 
    backward information filter outputs are concatenated and passed 
    through a transformer encoder, which produces covariance modification 
    matrices $\mathbf{D}_{f,k}$, $\mathbf{D}_{b,k}$ and an additive 
    correction vector $\boldsymbol{c}_k$. These are used to augment 
    the Two-Filter Smoother fusion, yielding the final BLENDS smoothed 
    state $\delta\boldsymbol{x}^{\text{BLENDS}}_{s,k}$.}
    \label{fig:architecture}
\end{figure*}
\subsection{Network Architecture}
\noindent
The input to the network is formed by concatenating a time window of 
the forward and backward error-state vectors with their associated 
flattened covariance matrices:
\begin{equation}
    \boldsymbol{u}_k = \left[\delta\boldsymbol{x}_{f,k},\; 
    \delta\boldsymbol{x}_{b,k},\; 
    \text{vec}(\mathbf{P}_{f,k}),\; 
    \text{vec}(\mathbf{P}_{b,k})\right] \in \mathbb{R}^{480}
    \label{eqn:net_input}
\end{equation}
where $\text{vec}(\cdot)$ denotes the vectorization of a matrix. The 
full sequence $\{\boldsymbol{u}_k\}_{k=1}^{N}$ is processed jointly 
by a transformer encoder, which exploits temporal dependencies across 
the trajectory to produce three outputs per time step:
\begin{equation}
    (\mathbf{D}_{f,k},\; \mathbf{D}_{b,k},\; \boldsymbol{c}_k )= 
    \mathcal{F}_{\boldsymbol{\theta}}\!\left(\boldsymbol{u}_{1:N}\right)
    \label{eqn:net_output}
\end{equation}
where $\mathbf{D}_{f,k}, \mathbf{D}_{b,k} \in \mathbb{R}^{15\times15}$ 
are covariance modification matrices, $\boldsymbol{c}_k \in \mathbb{R}^{15}$ 
is an additive correction vector, and $\boldsymbol{\theta}$ denotes the 
trainable network parameters. The BLENDS architecture is illustrated 
in Fig.~\ref{fig:architecture}. The input sequence is projected to 
embedding dimension $d_{\text{model}}$ via a linear layer, followed by 
sinusoidal positional encoding to preserve temporal order. The embedded 
sequence is processed by a transformer encoder~\cite{vaswani2017attention} 
of $L$ layers, each with $H$ attention heads, feed-forward 
hidden dimension $d_{\text{ff}}$, and GELU 
activations~\cite{hendrycks2016gaussian}, enabling the network to 
exploit long-range temporal dependencies across the trajectory.
\\ \noindent
The encoder output is passed to two independent prediction heads, each 
consisting of two linear layers with a GELU activation and layer 
normalization, with hidden dimension $d_{\text{head}}$. The covariance 
modification head outputs $\hat{\mathbf{D}}_{f,k}$ and $\hat{\mathbf{D}}_{b,k}$, 
and the correction head outputs $\hat{\boldsymbol{c}}_k$. The modification 
matrices are parameterized as near-identity residual perturbations:
\begin{equation}
    \mathbf{D}_{f,k} = \mathbf{I}_{15} + \alpha\,
    \tanh\!\left(\hat{\mathbf{D}}_{f,k}\right), \quad
    \mathbf{D}_{b,k} = \mathbf{I}_{15} + \alpha\,
    \tanh\!\left(\hat{\mathbf{D}}_{b,k}\right)
    \label{eqn:Dscale}
\end{equation}
where $\alpha \ll 1$ ensures that the modification matrices remain close to 
the identity, limiting the network's influence on the filter covariances 
to a small learned perturbation. Both heads are initialized with zero 
weights and biases, ensuring that at the start of training BLENDS 
reduces exactly to the standard TFS, providing a stable and physically 
meaningful initialization point.
\subsection{Bayesian-Consistent Loss Function}
\noindent
A key design principle of BLENDS is that the network output must remain 
consistent with the theoretical foundations of Bayesian estimation. 
Specifically, the smoothed state should represent the minimum-variance 
unbiased estimate, and the smoothed covariance should reflect 
the true uncertainty of the solution. To enforce these properties, we propose a Bayesian-consistent loss (BCL) that jointly supervises the mean and covariance of the smoothed output, comprising four loss terms: 1) position loss $\mathcal{L}^{(p)}_k$, 2) velocity loss $\mathcal{L}^{(v)}_k$, 3) rotation loss $\mathcal{L}^{(R)}_k$, and 4) covariance loss $\mathcal{L}^{(c)}_k$. The BCL is defined as:
\begin{equation}
    \mathcal{L}_{\text{BCL}} = \frac{1}{BT}\sum_{b=1}^{B}\sum_{k=1}^{T}
    \left(
    \lambda_p\,\mathcal{L}^{(p)}_k + 
    \lambda_v\,\mathcal{L}^{(v)}_k + 
    \lambda_r\,\mathcal{L}^{(R)}_k +
    \lambda_c\,\mathcal{L}^{(c)}_k
    \right)
    \label{eqn:total_loss}
\end{equation}
where $\lambda_p$, $\lambda_v$, $\lambda_r$, $\lambda_c$ are the 
weighting coefficients while $T$ and $B$ are the time window and batch size, respectively. The first three terms supervise the estimated 
mean, while the last term enforces covariance consistency.
\\ \noindent
The position loss $\mathcal{L}^{(p)}_k$ is computed in the local NED 
frame by converting the predicted and GT latitude, longitude, and altitude coordinates to NED displacements relative to a common reference point, taken as the first GT position in the batch. This normalization decouples the loss from absolute geographic coordinates, allowing the network to generalize across trajectories collected at arbitrary locations on the Earth without requiring position-specific training data. The position loss is defined as:
\begin{equation}
    \mathcal{L}^{(p)}_k = \ell_{\beta}\!\left(
    \boldsymbol{p}^{\text{NED},\text{GT}}_k - 
    \boldsymbol{p}^{\text{NED}}_k,\; 
    \boldsymbol{0}\right)
    \label{eqn:pos_loss}
\end{equation}
where $\ell_{\beta}(\cdot)$ denotes the Huber loss~\cite{huber1992robust} with threshold 
$\beta = 5.0$~m, which provides robustness to large position errors 
during early training.
\\ \noindent
The velocity loss is defined analogously:
\begin{equation}
    \mathcal{L}^{(v)}_k = \ell_{\beta}\!\left(
    \boldsymbol{v}^{\text{GT}}_k - \boldsymbol{v}_k,\; 
    \boldsymbol{0}\right).
    \label{eqn:vel_loss}
\end{equation}
The rotation loss $\mathcal{L}^{(R)}_k$ supervises the estimated 
attitude by penalizing the Frobenius norm of the difference between 
the predicted and GT rotation matrices:
\begin{equation}
    \mathcal{L}^{(R)}_k = \left\|
    \mathbf{R}^{\text{GT}}_k - \mathbf{R}_k
    \right\|^2_F
    \label{eqn:rot_loss}
\end{equation}
where $\|\cdot\|_F$ denotes the Frobenius norm. This formulation 
directly penalizes errors in all nine elements of the rotation matrix, 
providing a geometry-aware attitude supervision signal.
\\ \noindent
The covariance consistency term $\mathcal{L}^{(c)}_k$ minimizes the trace of the smoothed covariance matrix, and by doing so, enforces the minimum-variance property that is the theoretical optimality criterion of the Kalman smoother:
\begin{equation}
    \mathcal{L}^{(c)}_k = \text{Tr}\!\left(\mathbf{P}_{s,k}\right)
    \label{eqn:cov_loss}
\end{equation}
where the smoothed covariance is expressed in~\eqref{eqn:blends_cov}.
\\ \noindent
Jointly minimizing $\mathcal{L}^{(p)}_k$, $\mathcal{L}^{(v)}_k$, and 
$\mathcal{L}^{(R)}_k$ drives the smoothed mean towards the GT, while minimizing $\mathcal{L}^{(c)}_k$ ensures that the 
covariance matrix remains tight and physically meaningful, preventing 
the network from producing overconfident or inconsistent uncertainty 
estimates. Together, these terms ensure that BLENDS produces a 
Bayesian-consistent output in both mean and covariance.
\section{Analysis and Results}\label{sec:results}

\begin{figure}[!h]
    \centering
    \includegraphics[width=\columnwidth]{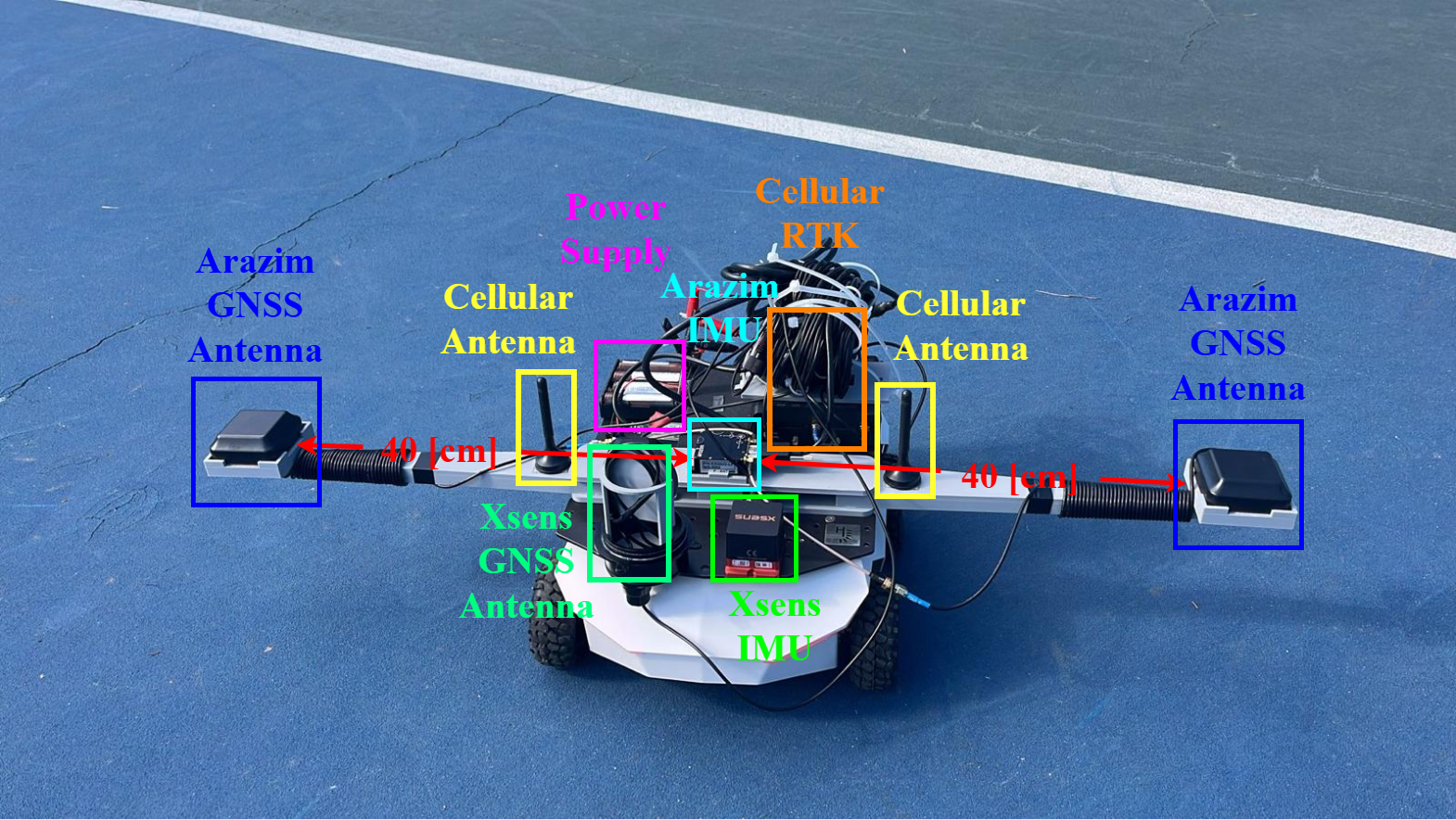}
    \caption{Experimental platform used for data collection. The mobile robot is equipped with two sensor suites: the 
        Arazim system, comprising a MEMS IMU and two 
        GNSS antennas mounted 40\,cm from the vehicle 
        center on a rigid crossbar, and the Xsens 
        system, comprising a consumer-grade IMU and a 
        single GNSS antenna mounted at the vehicle 
        body. A cellular RTK receiver provides 
        centimeter-level reference trajectories via a network RTK 
        correction service, supported by two cellular 
        antennas and a power supply.}
    \label{fig:arazim}
\end{figure}
\subsection{Datasets Description}
\subsubsection{Mobile Robot}
\noindent
Field data were collected using a Husarion ROSbot XL 
\cite{husarion_rosbotxl} mobile robot platform, equipped with two 
independent sensor suites operating simultaneously. The experimental 
configuration is illustrated in Fig.~\ref{fig:arazim}. The unit under test is the Arazim 
Exiguo\textsuperscript{\textregistered} EX-300
IMU~\cite{arazim_ex300} , operating at 100\,Hz. GNSS position updates are provided by the Xsens MTi-680G at 5\,Hz~\cite{xsens_mti680g}, serving as the measurement input to 
the navigation filter. The GT reference is the RTK solution integrated to 
the Arazim Exiguo\textsuperscript{\textregistered} EX-300 IMU at 
10\,Hz, providing centimeter-level position accuracy via a cellular 
network RTK correction service. In addition, the dual-antenna 
configuration with a baseline separation of 80\,cm provides 
sub-degree heading measurements. The RTK and the sub-degree heading measurements solutions does not contribute to the 
filter inputs and is used exclusively for performance evaluation.
\\ \noindent
The dataset comprises approximately 55 minutes of recorded data, 
divided into 30 trajectory segments spanning six motion patterns: 
square, infinity, sine, lawnmower, circle, and zigzag. The 
trajectories vary in duration and complexity, closed geometric 
paths, such as squares and circles, range from roughly 30 to 140 
seconds per lap, while open patterns, such as lawnmower and sine, 
sweeps extend up to 250 seconds per segment. Together they cover 
a broad range of heading changes, curvatures, and linear 
accelerations representative of mobile vehicle operation. The dataset is split such that the square, infinity, sine, lawnmower, circle, and zigzag segments are used for training, while the three mixed-maneuver trajectories are held out for validation and testing. 
The training set comprises approximately 49 minutes of data. The validation set, denoted MR-Val, consists of Trajectory~1 and 
comprises around 2 minutes, while the test set, denoted MR-Tes, 
includes Trajectory~2 and Trajectory~3, amounting to approximately 
4 minutes in total. Table~\ref{tab:imu_specs} summarizes the IMU specifications.
\begin{table}[!t]
    \centering
    \caption{Arazim EX-300 IMU Specifications.}
    \label{tab:imu_specs}
    \begin{tabular}{lc}
        \hline
        \textbf{Parameter} & \textbf{Arazim EX-300} \\
        \hline
        \multicolumn{2}{l}{\textit{Gyroscope}} \\
        In-run bias stability   & 2\,°/hr \\
        Angle random walk                     & 0.1\,°/$\sqrt{\text{hr}}$ \\
        \hline
        \multicolumn{2}{l}{\textit{Accelerometer}} \\
        In-run bias stability   & 20\,$\mu$g \\
        Velocity random walk                      & 0.05\,m/s/$\sqrt{\text{hr}}$ \\
        \hline
    \end{tabular}
\end{table}

\subsubsection{Quadrotor}
\noindent
The second dataset is the INSANE benchmark~\cite{brommer2022insane}, 
which employs an aerial quadrotor platform equipped with a Pixhawk4 
autopilot. An external RTK receiver provides centimeter-level GT, alongside additional sensors that are not utilized in this 
work. Specifically, this paper considers the Mars analogue desert 
subset, collected at the Ramon Crater in the Negev Desert, Israel. 
The Pixhawk4 IMU operates at 200\,Hz, GNSS position updates are 
provided at 5\,Hz, and the RTK GT at 8\,Hz. The onboard 
ICM-20689 IMU is a consumer-grade MEMS sensor, lower in both cost 
and performance compared to the Arazim EX-300 used 
in the mobile robot dataset, as reflected by its higher noise density 
and bias instability characteristics. The Mars analogue subset consists of 19 recorded trajectories, numbered 
trajectory~1 through trajectory~19. Following inspection of all trajectories for data 
quality and completeness, 13 trajectories were selected for use in this 
work. The selected trajectories are drawn primarily from zones 1 and 2, 
which are used for training and validation, while one trajectory from 
the entirely unseen zone 3 is held out for testing. The training set 
comprises approximately 23.7 minutes of data. The validation set, 
denoted QR-Val, consists of Trajectory~2 and comprises around 
2.8 minutes, while the test set, denoted QR-Tes, includes 
Trajectory~12 and Trajectory~15, amounting to approximately 
4.8 minutes of flight data in total. This split ensures that the test trajectory is representative of a genuinely unseen environment, providing a rigorous evaluation of the generalization capability of BLENDS. Full details regarding platform configuration, sensor calibration, and data collection procedures are available in the original publication~\cite{brommer2022insane}.
\subsection{Evaluation Metrics}
\noindent
The performance of each estimator is evaluated using the root mean square error (RMSE), defined as:
\begin{equation}
    \text{RMSE} = \sqrt{\frac{1}{T}\sum_{k=1}^{T}
    \left\|\boldsymbol{e}_k\right\|^2}
    \label{eqn:rmse}
\end{equation}
where $\boldsymbol{e}_k$ denotes the error vector at time step $k$. 
This metric is applied separately to position [m], velocity [m/s], 
and orientation [rad]. Additionally, to quantify the improvement in 
estimation uncertainty, the percent covariance improvement (PCI) is 
defined as the relative reduction in the trace of the smoothed 
covariance with respect to the forward EKF covariance:
\begin{equation}
    \text{PCI}_{j} = 100\times\frac{\text{Tr}\!\left(
    \mathbf{P}_{j}^{\text{ref}} - \mathbf{P}_{j}^{\text{test}} \right)}
    {\text{Tr}\!\left(\mathbf{P}_{j}^{\text{ref}}\right)}
    \label{eqn:pci}
\end{equation}
where $\mathbf{P}_{j}^{\text{ref}}$ is the forward EKF covariance and 
$\mathbf{P}_{j}^{\text{test}}$ is the smoothed covariance at the 
corresponding time step. A higher PCI value indicates a greater 
reduction in uncertainty achieved by the smoothing algorithm.

\subsection{Implementation Details}
\noindent
The deep neural network is implemented in PyTorch. All weights and 
biases of the final layers, those responsible for outputting the 
covariance modification matrices and the additive correction term, are 
initialized to zero, ensuring that the network reduces exactly to the 
standard TFS at initialization, as described in 
Section~\ref{sec:proposed}.

\begin{figure*}[!h]
    \centering
    \begin{subfigure}{0.32\textwidth}
        \includegraphics[width=\textwidth]{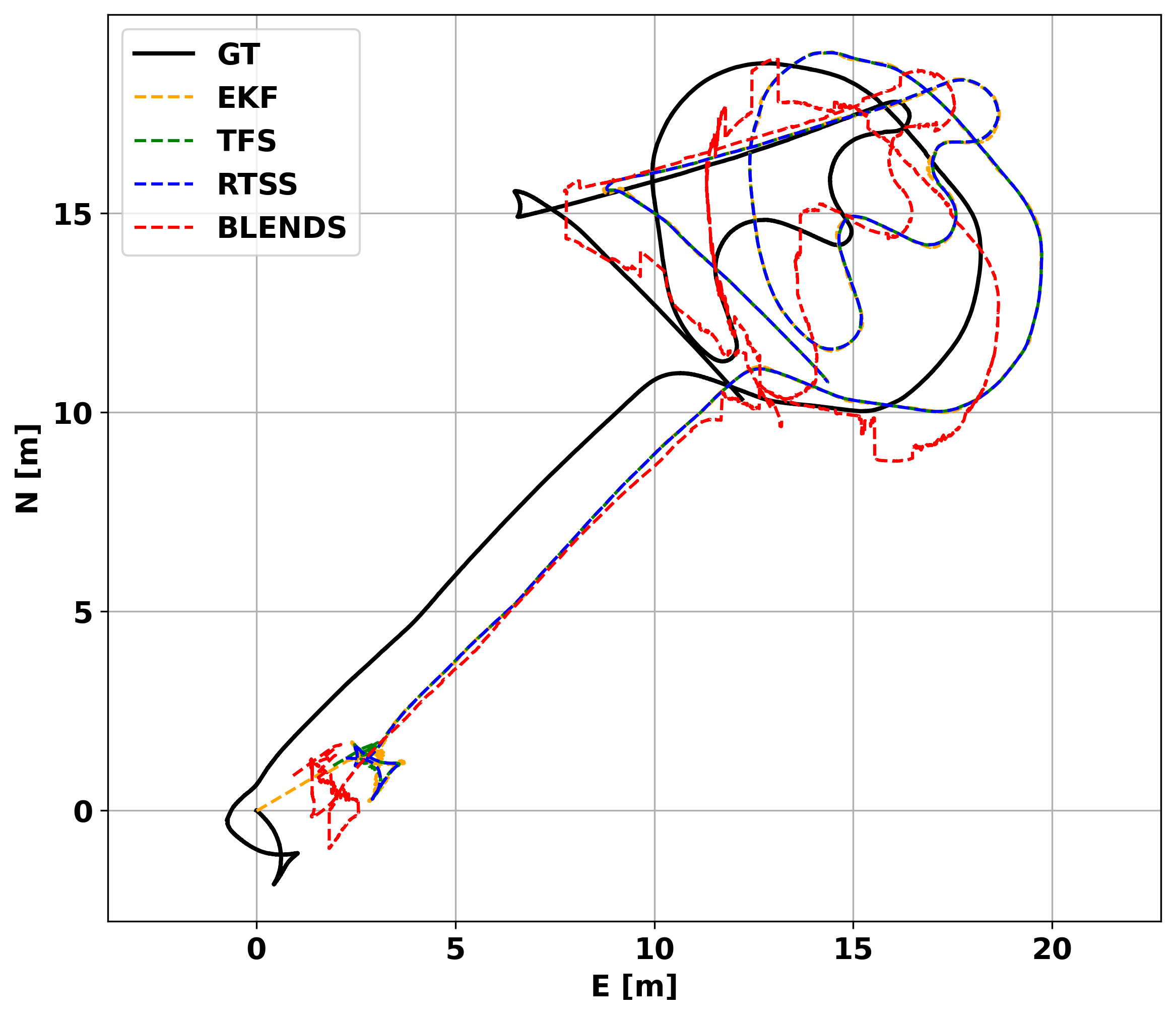}
        \caption{MR-Val (Traj.~1).}
    \end{subfigure}\hfill
    \begin{subfigure}{0.32\textwidth}
        \includegraphics[width=\textwidth]{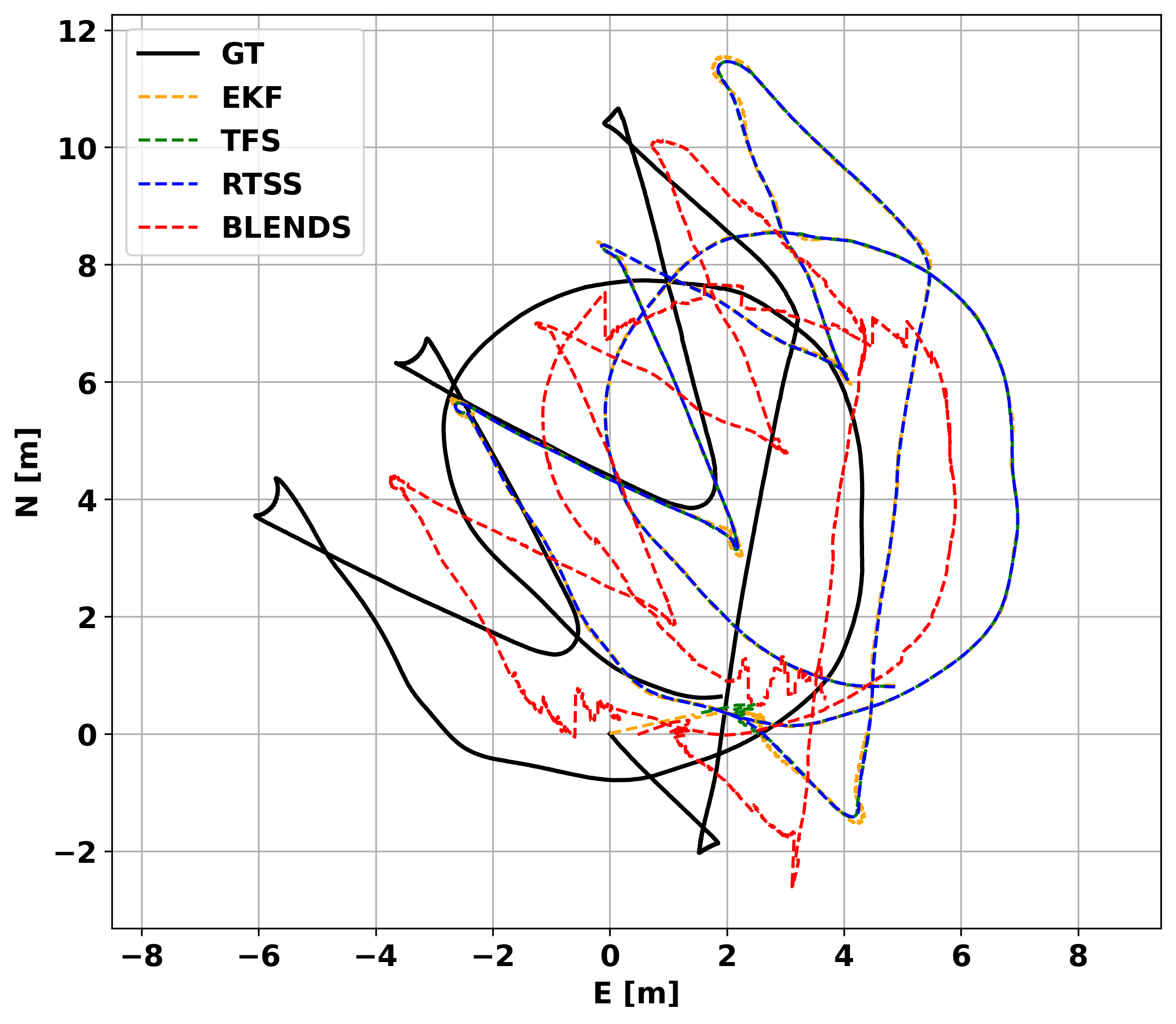}
        \caption{MR-Tes (Traj.~2).}
    \end{subfigure}\hfill
    \begin{subfigure}{0.32\textwidth}
        \includegraphics[width=\textwidth]{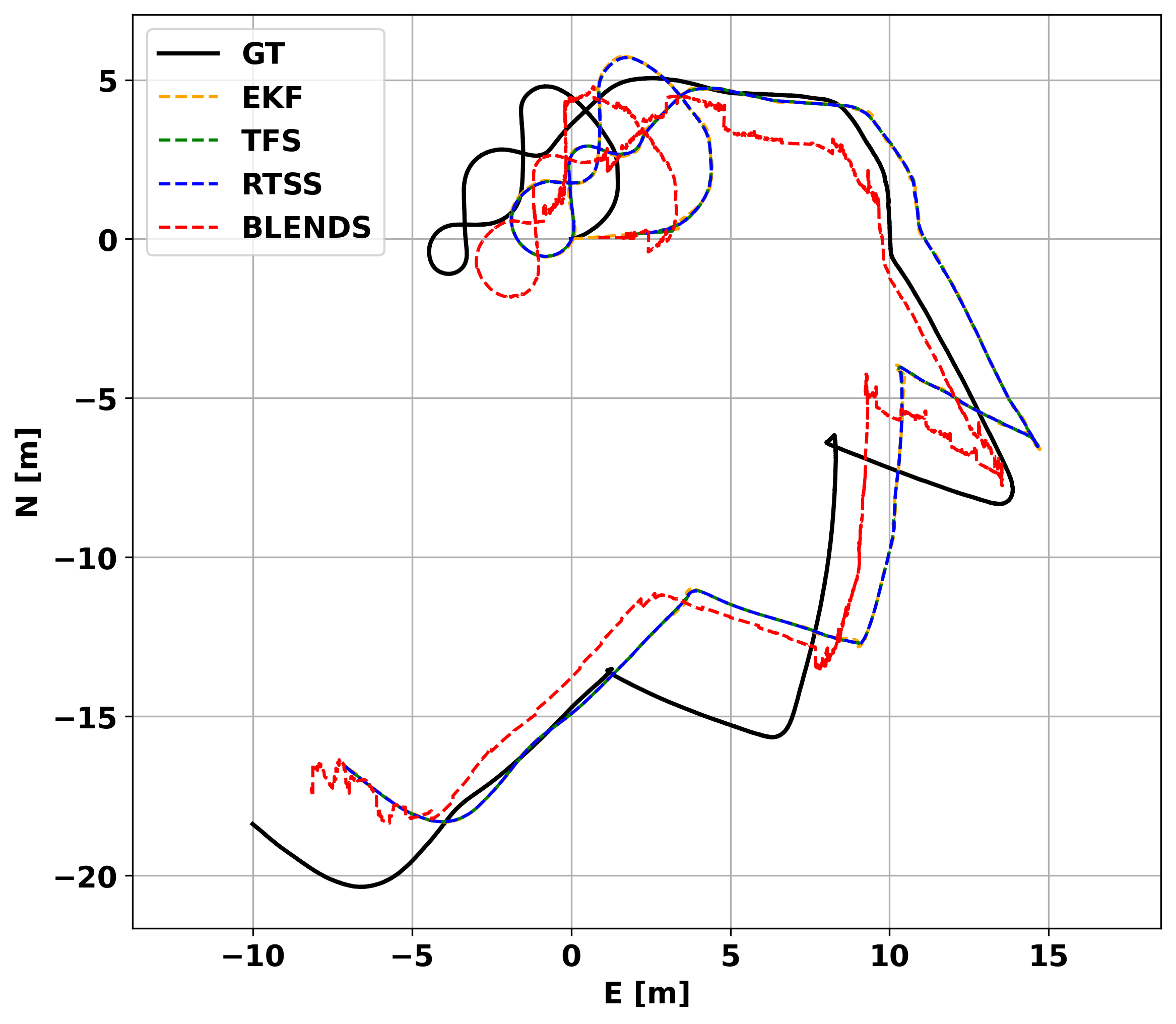}
        \caption{MR-Tes (Traj.~3).}
    \end{subfigure}
    \vspace{0.5em}
    \begin{subfigure}{0.32\textwidth}
        \includegraphics[width=\textwidth]{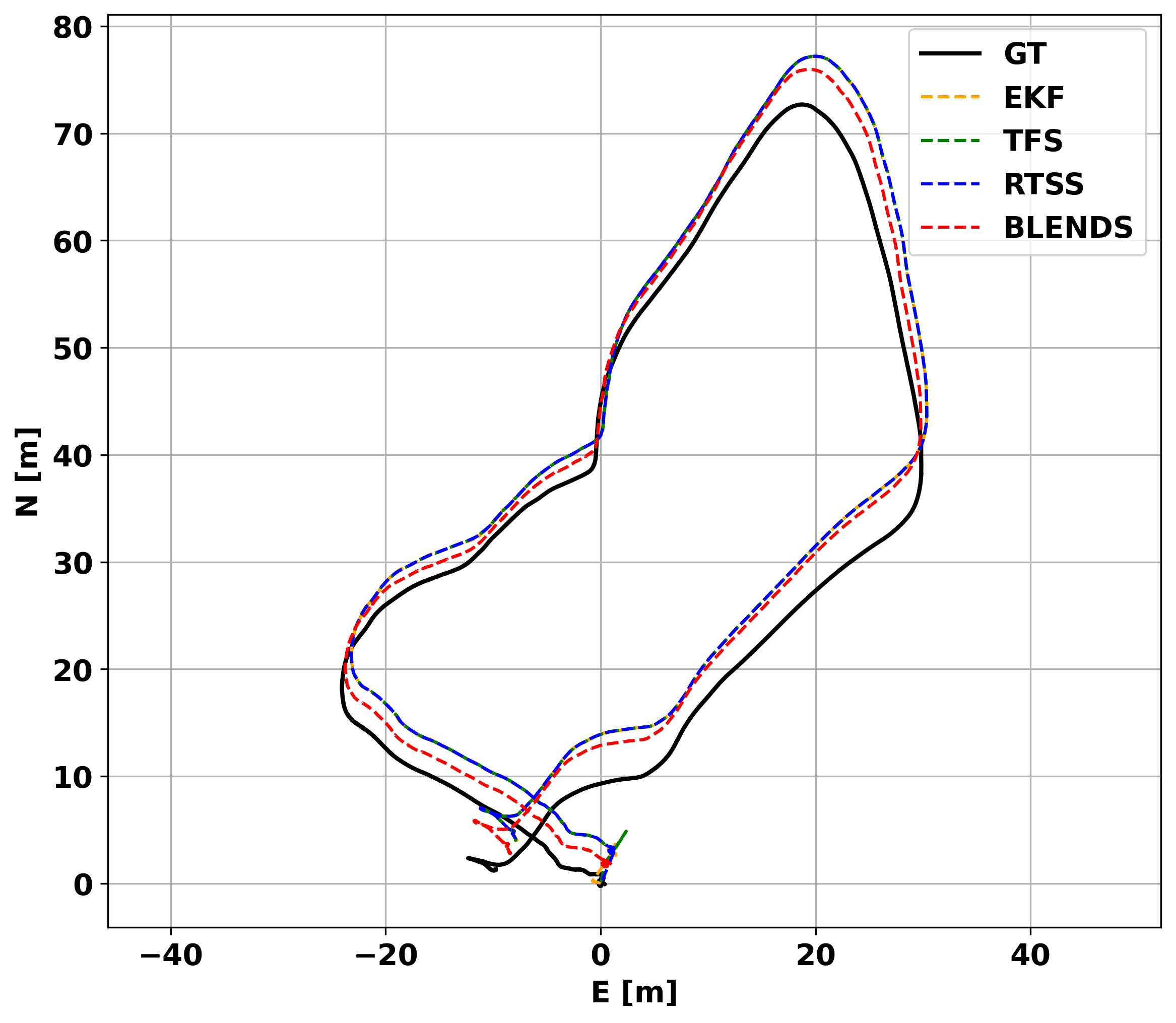}
        \caption{QR-Val (Traj.~2).}
    \end{subfigure}\hfill
    \begin{subfigure}{0.32\textwidth}
        \includegraphics[width=\textwidth]{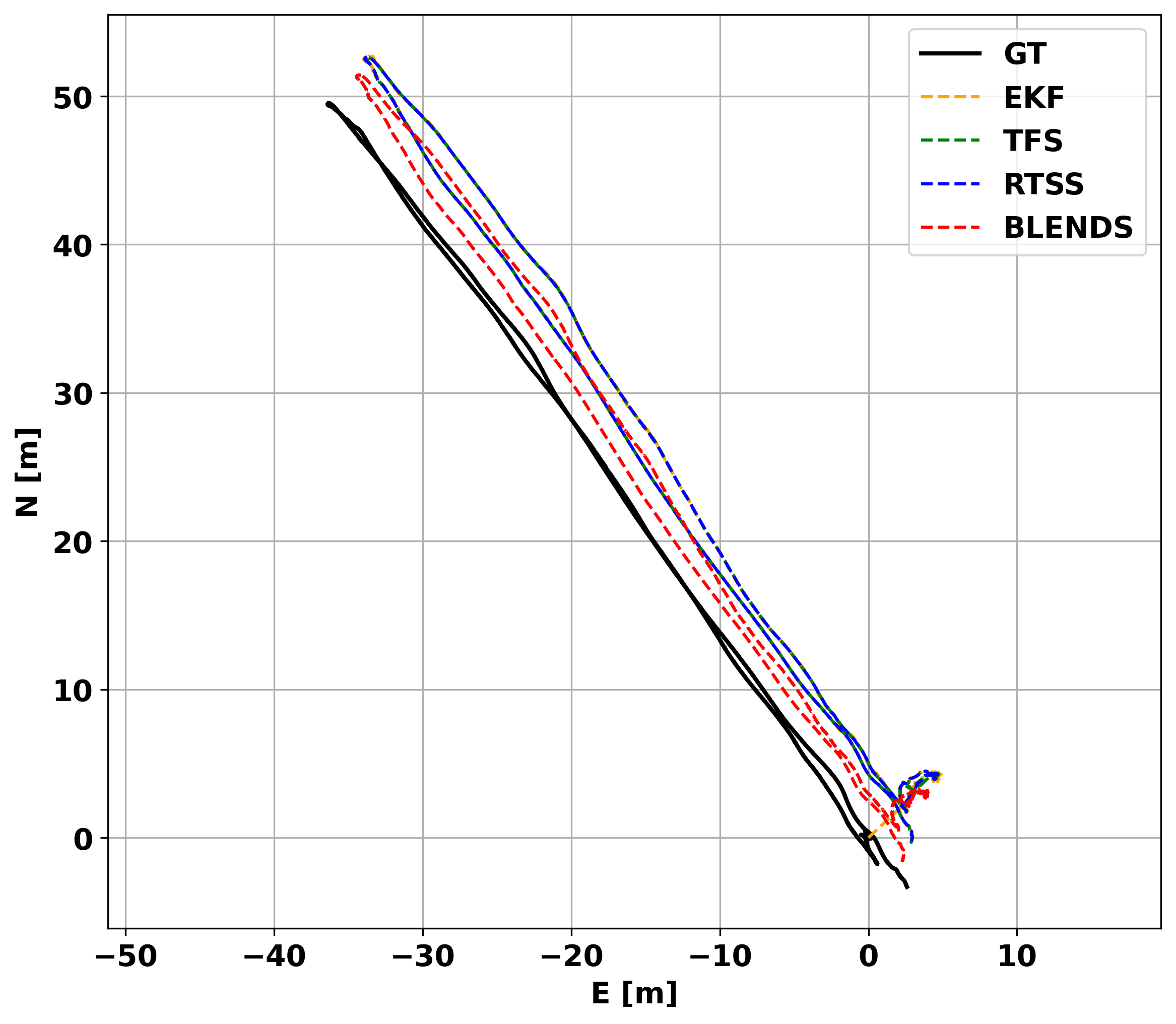}
        \caption{QR-Tes (Traj.~12).}
    \end{subfigure}\hfill
    \begin{subfigure}{0.32\textwidth}
        \includegraphics[width=\textwidth]{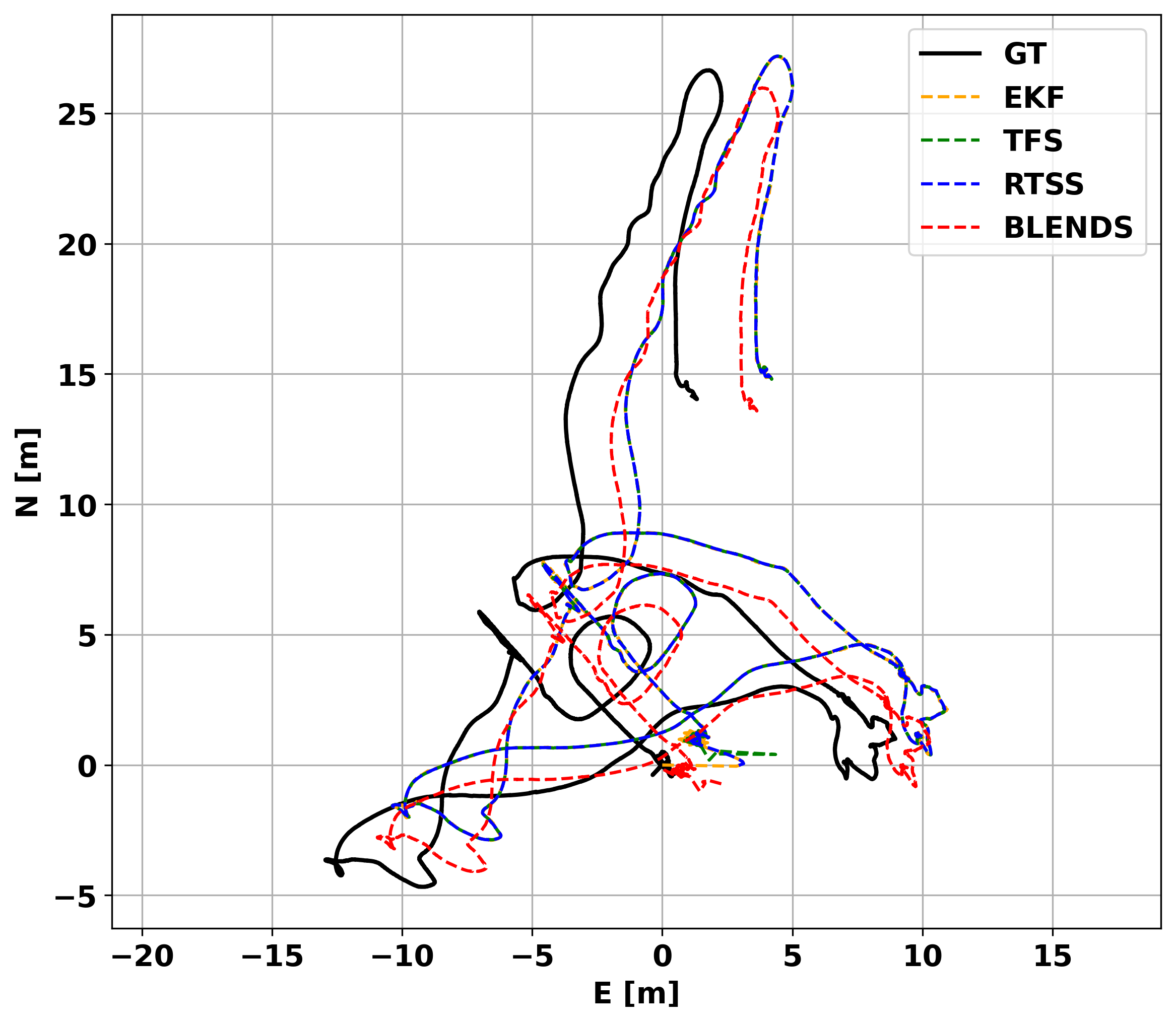}
        \caption{QR-Tes (Traj.~15).}
    \end{subfigure}
    \caption{2D position trajectory comparison across all evaluation 
    trajectories for the mobile robot (top) and quadrotor (bottom) datasets. 
    Each subplot shows the GT alongside the estimated 
    trajectories of EKF, TFS, RTSS, and BLENDS.}
    \label{fig:2d_trajectories}
\end{figure*}

\noindent
The GT labels used for supervised training are generated by 
applying the TFS to the inertial measurements along with 
the RTK position measurements. The resulting smoothed trajectory 
serves as the reference signal for all loss terms 
in~\eqref{eqn:total_loss}. This approach ensures that the network is 
trained to reproduce a theoretically optimal Bayesian estimate, rather 
than being directly supervised by raw RTK measurements alone.

\noindent
Training is performed using the AdamW optimizer 
\cite{loshchilov2017decoupled} with an initial learning rate of 
$10^{-2}$ and weight decay of $10^{-6}$, with a batch size of $128$ 
for both datasets. An adaptive learning rate scheduling is employed, 
reducing the learning rate by a factor of $0.1$ whenever the 
validation loss fails to improve for $10$ consecutive epochs, with a 
minimum learning rate floor of $10^{-8}$. The transformer encoder 
comprises $L=2$ layers with $H=16$ attention heads, a model 
dimension of $d_{\text{model}}=256$, a feed-forward dimension of 
$d_{\text{ff}}=512$, and a dropout rate of $0.1$. Both prediction 
heads use a hidden dimension of $d_{\text{head}}=256$. The input 
sequence is processed with a non-overlapping sliding window, where 
the window size equals the stride. For the mobile robot dataset, a window 
of $T=150$ time steps is used, corresponding to 1.5\,s of data at 
100\,Hz. For the quadrotor dataset, a window of $T=50$ time steps is used, 
corresponding to 0.25\,s of data at 200\,Hz. The curriculum ramp 
defined in~\eqref{eqn:ramp_fn} is applied with warm-up $e_w=1000$ 
epochs and power $p=2$, smoothly transitioning the correction bound 
from $\boldsymbol{m}^{\text{wide}}$ to $\boldsymbol{m}^{\text{base}}$ 
as specified in Table~\ref{tab:bounds}. 
\begin{table}[!h]
    \centering
    \caption{Per-state correction bounds for the BLENDS correction 
    term $\boldsymbol{c}_k$ for the mobile robot and quadrotor datasets. The 
    bound contracts during training from $\boldsymbol{m}^{\text{wide}}$ 
    to $\boldsymbol{m}^{\text{base}}$ via the curriculum ramp $\rho$.}
    \label{tab:bounds}
    \renewcommand{\arraystretch}{1.3}
    \resizebox{\columnwidth}{!}{%
    \begin{tabular}{lcccc}
        \hline
        & \multicolumn{2}{c}{\textbf{Mobile robot}} 
        & \multicolumn{2}{c}{\textbf{Quadrotor}} \\
        \cmidrule(lr){2-3}\cmidrule(lr){4-5}
        \textbf{State} & 
        $\boldsymbol{m}^{\text{wide}}$ & 
        $\boldsymbol{m}^{\text{base}}$ & 
        $\boldsymbol{m}^{\text{wide}}$ & 
        $\boldsymbol{m}^{\text{base}}$ \\
        \hline
        $\delta p_{x,y}$ [rad]            & $3\times10^{-7}$ & $2\times10^{-7}$ & $3\times10^{-7}$ & $2\times10^{-7}$ \\
        $\delta p_z$ [m]                  & $50.0$           & $1.0$            & $50.0$           & $1.0$            \\
        $\delta v_{x,y,z}$ [m/s]          & $2.0$            & $0.50$           & $15.0$           & $5.0$            \\
        $\delta\phi_{x,y,z}$ [rad]        & $\pi$            & $\pi/180$        & $\pi$            & $\pi/180$        \\
        $\delta b_{a_{x,y,z}}$ [m/s$^2$]  & $0.5$            & $0.2$            & $0.5$            & $0.1$            \\
        $\delta b_{g_{x,y,z}}$ [rad/s]    & $0.05$           & $0.002$          & $0.05$           & $0.01$           \\
        \hline
    \end{tabular}%
    }
\end{table}

\noindent
The covariance modification 
perturbation~\eqref{eqn:Dscale} is bounded by $\alpha=10^{-8}$. The BCL~\eqref{eqn:total_loss} is applied with weights $\lambda_p=10$, 
$\lambda_v=10^{-1}$, and $\lambda_c=10^{-2}$ for both datasets. The 
rotation weight differs between the two platforms: $\lambda_r=10^{-1}$ 
for the mobile robot dataset and $\lambda_r=1$ for the quadrotor dataset. Training was carried out for 200 epochs on the mobile robot dataset and 35 epochs on the quadrotor dataset. All 
of the above hyperparameters are shared between the two datasets, 
with the only dataset-specific differences being the window size and 
the correction bounds in Table~\ref{tab:bounds}, which are adapted 
to reflect the distinct motion characteristics of each platform.
\\ \noindent
Both the zero initialization of the output layer weights and biases, 
described above, and the correction bounds introduced 
in~\eqref{eqn:ramp} are critical for successful training. Together, 
they ensure that the network converges to reliable and physically 
meaningful values rather than diverging or causing training 
instability. The correction bounds for both datasets are designed to reflect the distinct 
motion characteristics of each platform. For the mobile robot dataset, the 
vehicle is operating at low speeds with smooth, planar 
trajectories, and consequently the velocity bounds are set 
conservatively ($\boldsymbol{m}^{\text{wide}} = 2.0$\,m/s, 
$\boldsymbol{m}^{\text{base}} = 0.5$\,m/s). In contrast, the quadrotor 
dataset involves performing agile three-dimensional 
manoeuvres at significantly higher velocities, requiring considerably 
wider bounds ($\boldsymbol{m}^{\text{wide}} = 15.0$\,m/s, 
$\boldsymbol{m}^{\text{base}} = 5.0$\,m/s) to allow the network 
sufficient freedom to correct the larger dynamic errors encountered 
during flight. The remaining state bounds are shared across both 
datasets. It should be noted that these bounds are themselves 
hyperparameters that must be tuned through a trial-and-error process, 
guided by prior knowledge of the platform dynamics and the expected 
ranges of each navigation state.
\\ \noindent
A final aspect that must be addressed is whether each component of 
the BCL converges independently during training. While the overall 
loss convergence is important, it does not tell the full story, since 
the total loss is dominated by the position term due to its relatively 
large weight. Therefore, each component must be monitored 
individually to ensure that the network is learning meaningful corrections across all navigation states. 
\subsection{Results}
\noindent
The performance of BLENDS is evaluated against three classical 
baselines, the forward EKF, the TFS, and the RTSS, across both datasets. For the mobile robot dataset, MR-Val consists of Trajectory~1, while 
MR-Tes includes Trajectory~2 and Trajectory~3. For the quadrotor 
dataset, QR-Val consists of Trajectory~2, while QR-Tes includes 
Trajectory~12 and Trajectory~15. Results, for the two platforms, are reported in 
Tables~\ref{tab:arazim_results} and~\ref{tab:mars_results}.
\begin{table}[!h]
\centering
\caption{Mobile robot dataset per-axis RMSE for position, velocity, 
and orientation across validation and test trajectories. }
\label{tab:arazim_results}
\resizebox{\columnwidth}{!}{%
\begin{tabular}{llccccccccc}
\toprule
& & \multicolumn{3}{c}{\textbf{Position [m]}} 
  & \multicolumn{3}{c}{\textbf{Velocity [m/s]}} 
  & \multicolumn{3}{c}{\textbf{Orientation [rad]}} \\
\cmidrule(lr){3-5}\cmidrule(lr){6-8}\cmidrule(lr){9-11}
\textbf{Traj.} & \textbf{Method} 
  & $p_N$ & $p_E$ & $p_D$ 
  & $v_N$ & $v_E$ & $v_D$ 
  & $\phi$ & $\theta$ & $\psi$ \\
\midrule
\multirow{4}{*}{\shortstack{MR-Val\\(1)}}
  & EKF    & 0.824 & 2.373 & 0.953 & 0.195 & 0.247 & 0.139 & 0.027 & 0.027 & 1.795 \\
  & RTSS   & 0.824 & 2.373 & \textbf{0.951} & \textbf{0.146} & \textbf{0.148} & \textbf{0.114} & \textbf{0.017} & \textbf{0.018} & \textbf{1.777} \\
  & TFS    & 0.825 & 2.374 & \textbf{0.951} & 0.172 & 0.206 & 0.137 & 0.018 & 0.019 & 1.786 \\
  & BLENDS & \textbf{0.793} & \textbf{1.335} & 0.958 & 0.177 & 0.198 & 0.137 & 0.018 & 0.019 & 1.786 \\
\midrule
\multirow{4}{*}{\shortstack{MR-Tes\\(2)}}
  & EKF    & 1.182 & 2.729 & 1.403 & 0.172 & 0.220 & 0.130 & 0.039 & 0.018 & 2.099 \\
  & RTSS   & 1.179 & 2.729 & 1.402 & \textbf{0.142} & \textbf{0.140} & \textbf{0.119} & \textbf{0.036} & \textbf{0.017} & 2.306 \\
  & TFS    & 1.179 & 2.729 & 1.402 & 0.152 & 0.181 & 0.126 & 0.037 & 0.018 & 2.322 \\
  & BLENDS & \textbf{0.543} & \textbf{1.661} & \textbf{1.345} & 0.152 & 0.177 & 0.127 & 0.037 & 0.018 & \textbf{2.099} \\
\midrule
\multirow{4}{*}{\shortstack{MR-Tes\\(3)}}
  & EKF    & 1.623 & 2.350 & 1.207 & 0.141 & 0.242 & 0.149 & 0.025 & 0.023 & 2.580 \\
  & RTSS   & 1.620 & 2.349 & \textbf{1.206} & 0.129 & 0.222 & 0.146 & 0.017 & \textbf{0.018} & 2.548 \\
  & TFS    & 1.620 & 2.348 & \textbf{1.206} & \textbf{0.123} & \textbf{0.159} & \textbf{0.134} & \textbf{0.015} & \textbf{0.018} & \textbf{2.534} \\
  & BLENDS & \textbf{1.446} & \textbf{1.368} & 1.242 & 0.129 & 0.220 & 0.149 & 0.017 & \textbf{0.018} & 2.549 \\
\bottomrule
\end{tabular}%
}
\end{table}

\noindent
As shown in the tables, BLENDS achieves consistent and substantial improvements in horizontal 
position accuracy across both datasets and all unseen test 
trajectories. On the mobile robot dataset, BLENDS reduces the horizontal position 
error on the validation trajectory and consistently on the two unseen 
test trajectories. As illustrated in Fig.~\ref{fig:2d_trajectories}, 
the estimated trajectory of BLENDS visually aligns more closely with 
the GT in the North--East plane compared to the classical 
baselines. On MR-Tes trajectory~2, the $p_N$ improvement stands at 
approximately 54\% while the $p_E$ RMSE improvement is around 40\% 
relative to the EKF baseline. A similar performance gain is observed 
on MR-Tes trajectory~3, where $p_N$ improves by more than 10\% and 
$p_E$ by more than 40\%, as further quantified in 
Fig.~\ref{fig:arazim_improvement}. 
The classical smoothers TFS and RTSS provide negligible improvement 
in position, as the positional bias cannot be compensated by 
covariance-based fusion alone, as established in 
Section~\ref{sec:motivation}, in other words, there is insufficient 
information within the filter framework to correct the systematic 
offset. BLENDS, through its learned covariance modification and 
additive correction term, is able to identify and compensate for the 
systematic position bias introduced by the loosely coupled GNSS 
position measurements.\\ \noindent
\begin{table}[!b]
\centering
\caption{Quadrotor dataset per-axis RMSE for position, velocity, 
and orientation across validation and test trajectories. }
\label{tab:mars_results}
\resizebox{\columnwidth}{!}{%
\begin{tabular}{llccccccccc}
\toprule
& & \multicolumn{3}{c}{\textbf{Position [m]}} 
  & \multicolumn{3}{c}{\textbf{Velocity [m/s]}} 
  & \multicolumn{3}{c}{\textbf{Orientation [rad]}} \\
\cmidrule(lr){3-5}\cmidrule(lr){6-8}\cmidrule(lr){9-11}
\textbf{Traj.} & \textbf{Method} 
  & $p_N$ & $p_E$ & $p_D$ 
  & $v_N$ & $v_E$ & $v_D$ 
  & $\phi$ & $\theta$ & $\psi$ \\
\midrule
\multirow{4}{*}{\shortstack{QR-Val\\(2)}}
  & EKF    & 3.703 & 1.100 & 3.819 & 0.287 & 0.175 & \textbf{0.275} & 0.041 & 0.024 & 0.408 \\
  & RTSS   & 3.704 & 1.101 & 3.822 & 0.283 & 0.166 & 0.292 & 0.032 & 0.012 & \textbf{0.179} \\
  & TFS    & 3.704 & 1.102 & 3.822 & 0.372 & 0.176 & 0.369 & 0.028 & \textbf{0.011} & 0.183 \\
  & BLENDS & \textbf{2.533} & \textbf{0.586} & \textbf{3.801} & \textbf{0.184} & \textbf{0.118} & 0.276 & \textbf{0.013} & 0.012 & 0.426 \\
\midrule
\multirow{4}{*}{\shortstack{QR-Tes\\(12)}}
  & EKF    & 3.423 & 2.609 & 2.979 & 0.205 & 0.281 & 0.255 & 0.037 & 0.050 & \textbf{0.406} \\
  & RTSS   & 3.422 & 2.611 & \textbf{2.977} & 0.147 & 0.165 & 0.273 & \textbf{0.034} & \textbf{0.036} & 0.558 \\
  & TFS    & 3.422 & 2.610 & \textbf{2.977} & \textbf{0.142} & \textbf{0.160} & \textbf{0.225} & 0.036 & 0.038 & 0.542 \\
  & BLENDS & \textbf{2.226} & \textbf{2.117} & 3.001 & 0.159 & 0.162 & 0.283 & \textbf{0.034} & 0.053 & 0.659 \\
\midrule
\multirow{4}{*}{\shortstack{QR-Tes\\(15)}}
  & EKF    & 1.345 & 2.298 & 1.808 & 0.152 & 0.213 & 0.200 & 0.037 & 0.023 & \textbf{0.224} \\
  & RTSS   & 1.345 & 2.298 & 1.808 & 0.127 & 0.131 & 0.207 & 0.027 & \textbf{0.013 }& 0.263 \\
  & TFS    & 1.345 & 2.298 & 1.807 & \textbf{0.122} & \textbf{0.119} & \textbf{0.177} & \textbf{0.024} & 0.015 & 0.243 \\
  & BLENDS & \textbf{0.496} & \textbf{1.752} & \textbf{1.787} & 0.150 & 0.134 & 0.209 & 0.028 & 0.018 & 0.257 \\
\bottomrule
\end{tabular}%
}
\end{table}
These results generalize to the quadrotor dataset, where the 
horizontal position improvements are even more pronounced. The 
visual separation between the BLENDS trajectory and those of the 
classical baselines in the North--East plane is clearly visible in 
Fig.~\ref{fig:2d_trajectories}. On QR-Tes trajectory~12, BLENDS 
reduces $p_N$ by up to 34\% and $p_E$ by up to 18\% relative to the 
EKF. On QR-Tes trajectory~15, which was recorded in an entirely 
unseen zone of the dataset, BLENDS achieves reductions of more than 
63\% in $p_N$ and more than 23\% in $p_E$, as further quantified in 
Fig.~\ref{fig:mars_improvement}. The classical smoothers again show 
minimal position improvement across all quadrotor trajectories, further 
confirming that the systematic bias cannot be corrected through 
covariance-based fusion alone and that a data-driven correction 
mechanism is necessary.
\begin{figure}[!h]
    \centering
    \begin{subfigure}{\columnwidth}
        \includegraphics[width=\columnwidth]{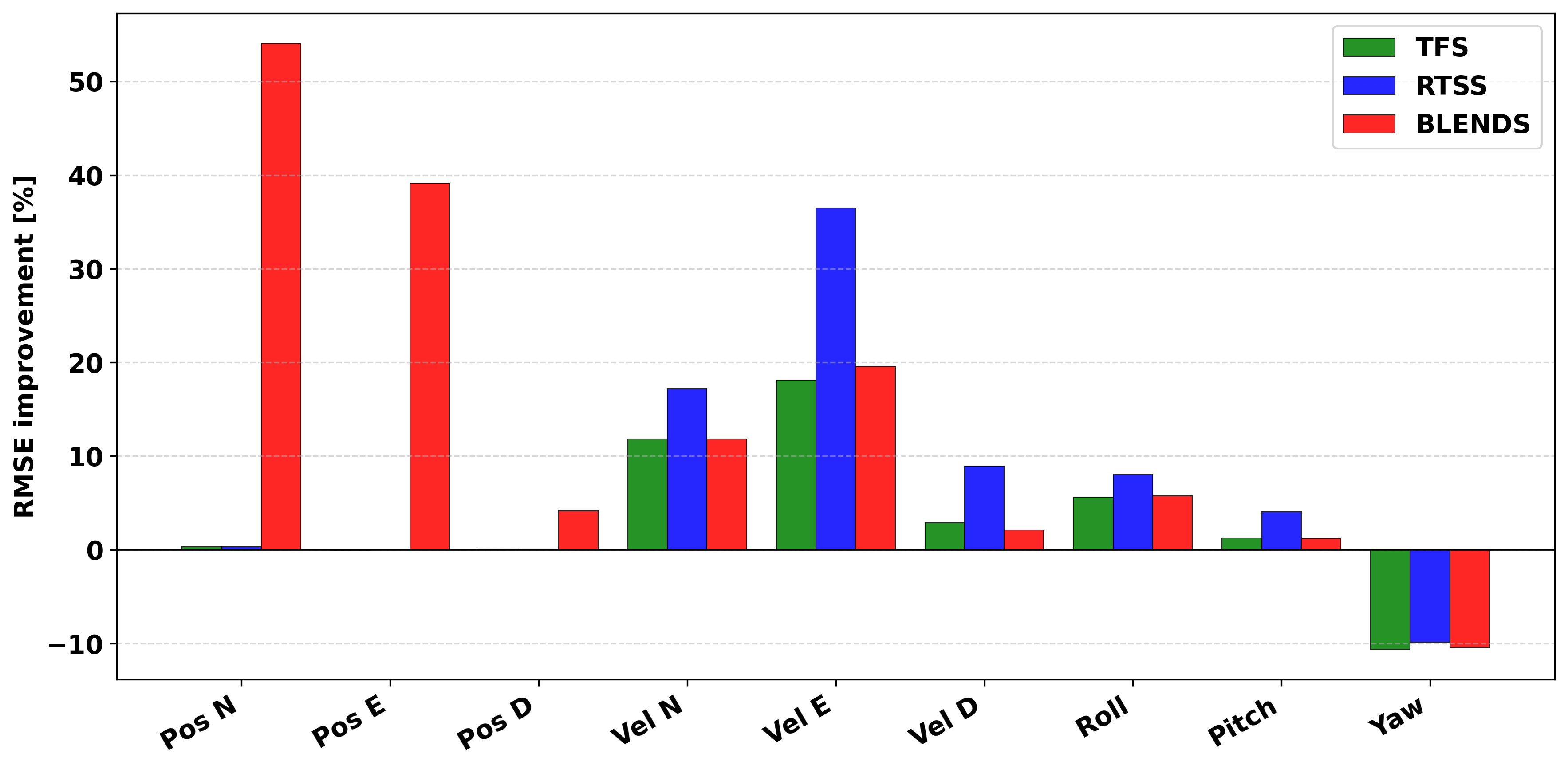}
        \caption{MR-Tes (Traj.~2).}
    \end{subfigure}
    \vspace{0.5em}
    \begin{subfigure}{\columnwidth}
        \includegraphics[width=\columnwidth]{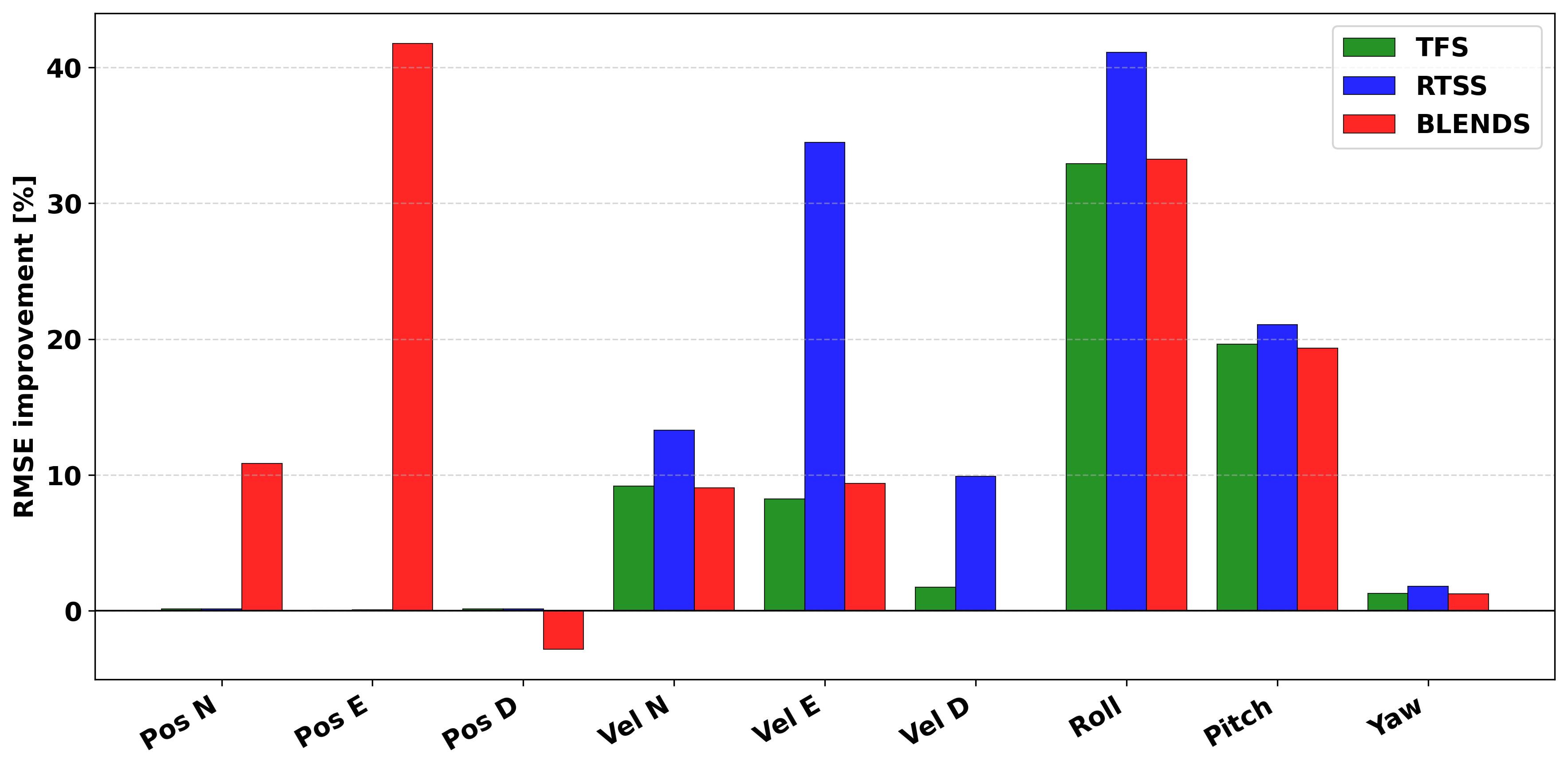}
        \caption{MR-Tes (Traj.~3).}
    \end{subfigure}
    \caption{RMSE improvement [\%] relative to the forward EKF 
    for TFS, RTSS, and BLENDS across all navigation states 
    on the mobile robot dataset test trajectories. Positive values indicate 
    improvement over the EKF baseline.}
    \label{fig:arazim_improvement}
\end{figure}
\begin{figure}[!b]
    \centering
    \begin{subfigure}{\columnwidth}
        \includegraphics[width=\columnwidth]{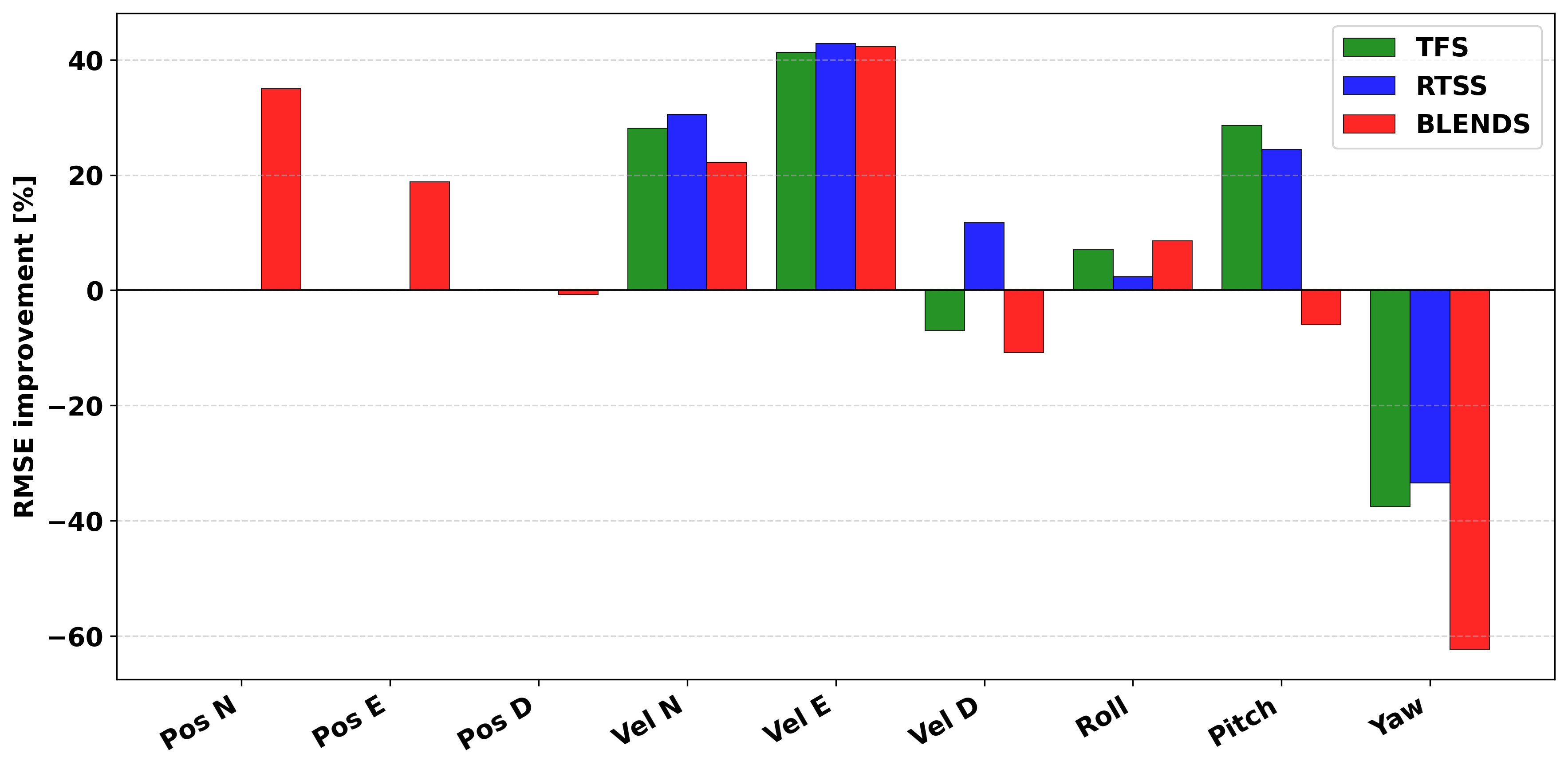}
        \caption{QR-Tes (Traj.~12).}
    \end{subfigure}
    \vspace{0.5em}
    \begin{subfigure}{\columnwidth}
        \includegraphics[width=\columnwidth]{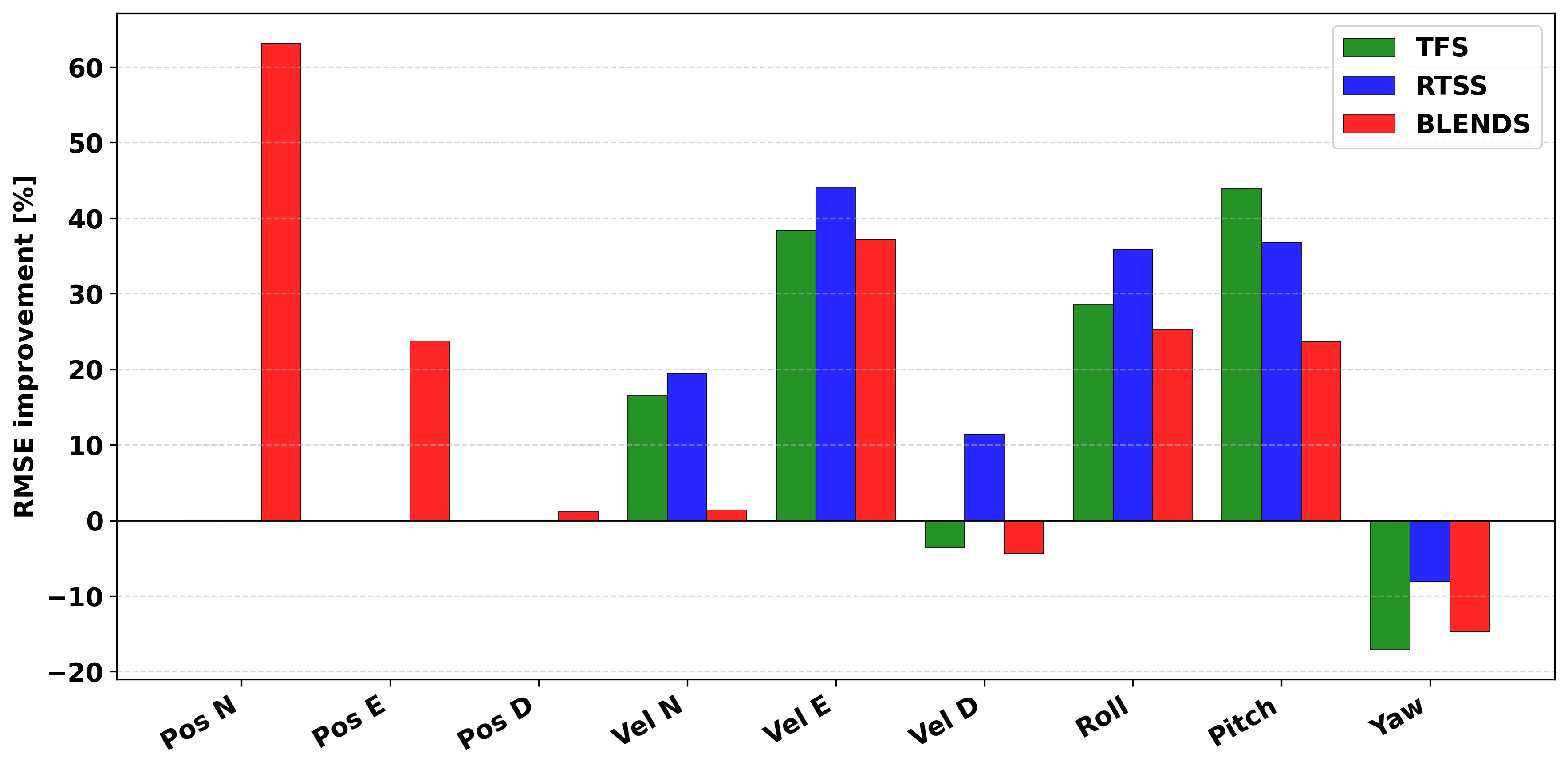}
        \caption{QR-Tes (Traj.~15).}
    \end{subfigure}
    \caption{RMSE improvement [\%] relative to the forward EKF 
    for TFS, RTSS, and BLENDS across all navigation states 
    on the quadrotor dataset test trajectories. Positive values indicate 
    improvement over the EKF baseline.}
    \label{fig:mars_improvement}
\end{figure}
\noindent
For velocity and orientation, the classical smoothers TFS and RTSS 
generally outperform BLENDS. On both the mobile robot and quadrotor datasets, RTSS consistently achieves the lowest velocity RMSE 
across all axes and trajectories, with improvements of over 30\% 
relative to the EKF on the mobile robot data and improvements of more 
than 40\% on the quadrotor data, while BLENDS shows comparable but 
slightly higher velocity errors. A similar pattern holds for 
orientation, where TFS and RTSS reduce roll and pitch errors more 
effectively than BLENDS. This behavior is expected, as the classical 
smoothers are theoretically optimal for Gaussian linear systems and 
seek to minimize the estimation variance. Both velocity and 
orientation are not directly measured by GNSS and are instead deduced 
by the filter through the INS mechanism and state propagation. As 
a result, they are more responsive to changes in filter parameters and 
to the incorporation of additional information, such as the future 
measurements provided by the smoother in the backward pass. The 
learned correction in BLENDS primarily targets the systematic position 
bias rather than the stochastic velocity and orientation errors. 
Although the BCL includes a minimum-variance 
regularization term, it is not absolute and permits the other loss 
components, particularly the position supervision, to trade off 
some velocity and orientation accuracy in favor of bias correction. 
In contrast, the model-based smoothers exclusively target variance 
reduction, which makes them more effective for these states.
\begin{figure*}[!h]
    \centering
    \begin{subfigure}{0.32\textwidth}
        \includegraphics[width=\textwidth]{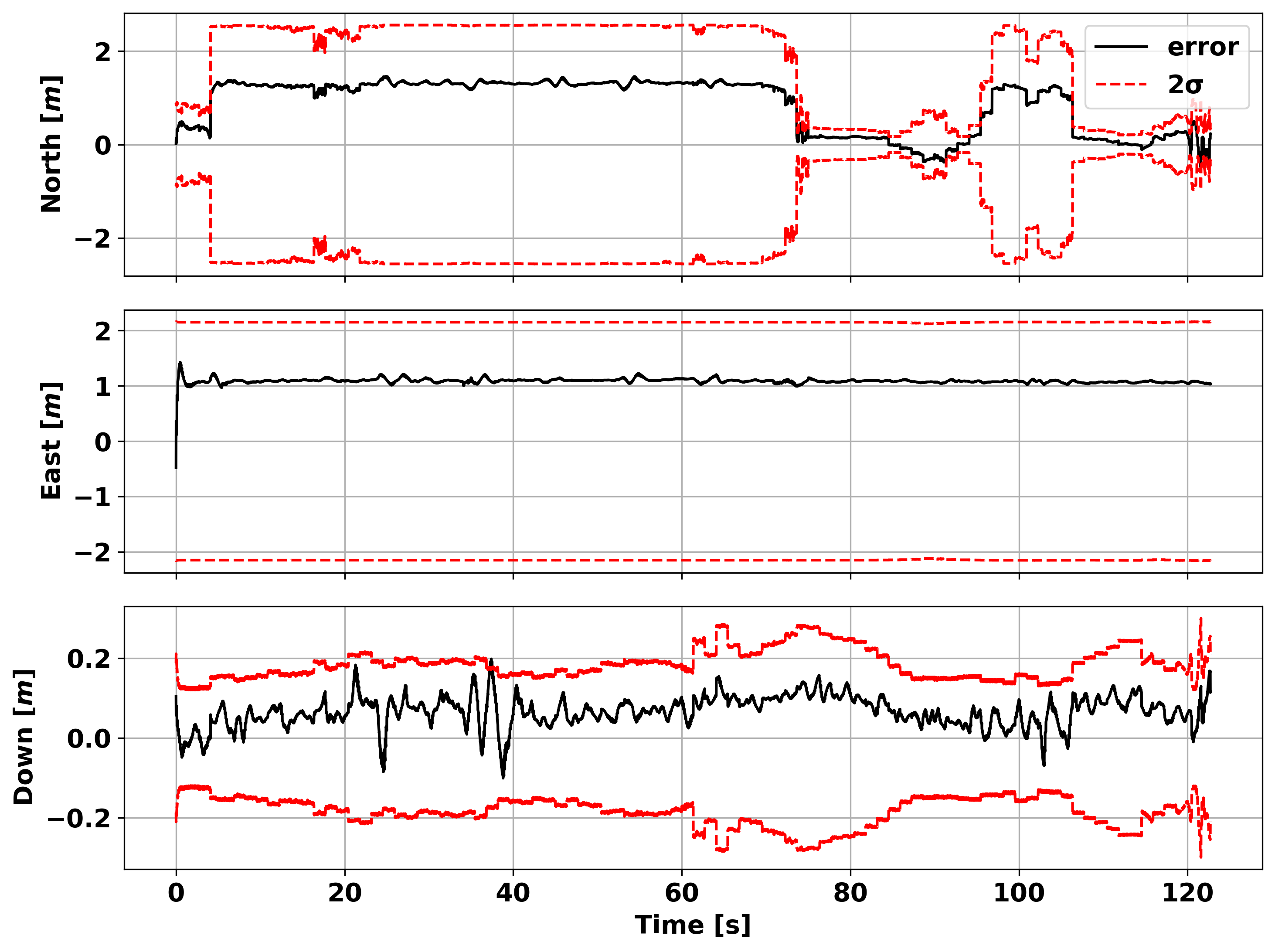}
        \caption{MR-Tes (Traj.~2) position error.}
    \end{subfigure}\hfill
    \begin{subfigure}{0.32\textwidth}
        \includegraphics[width=\textwidth]{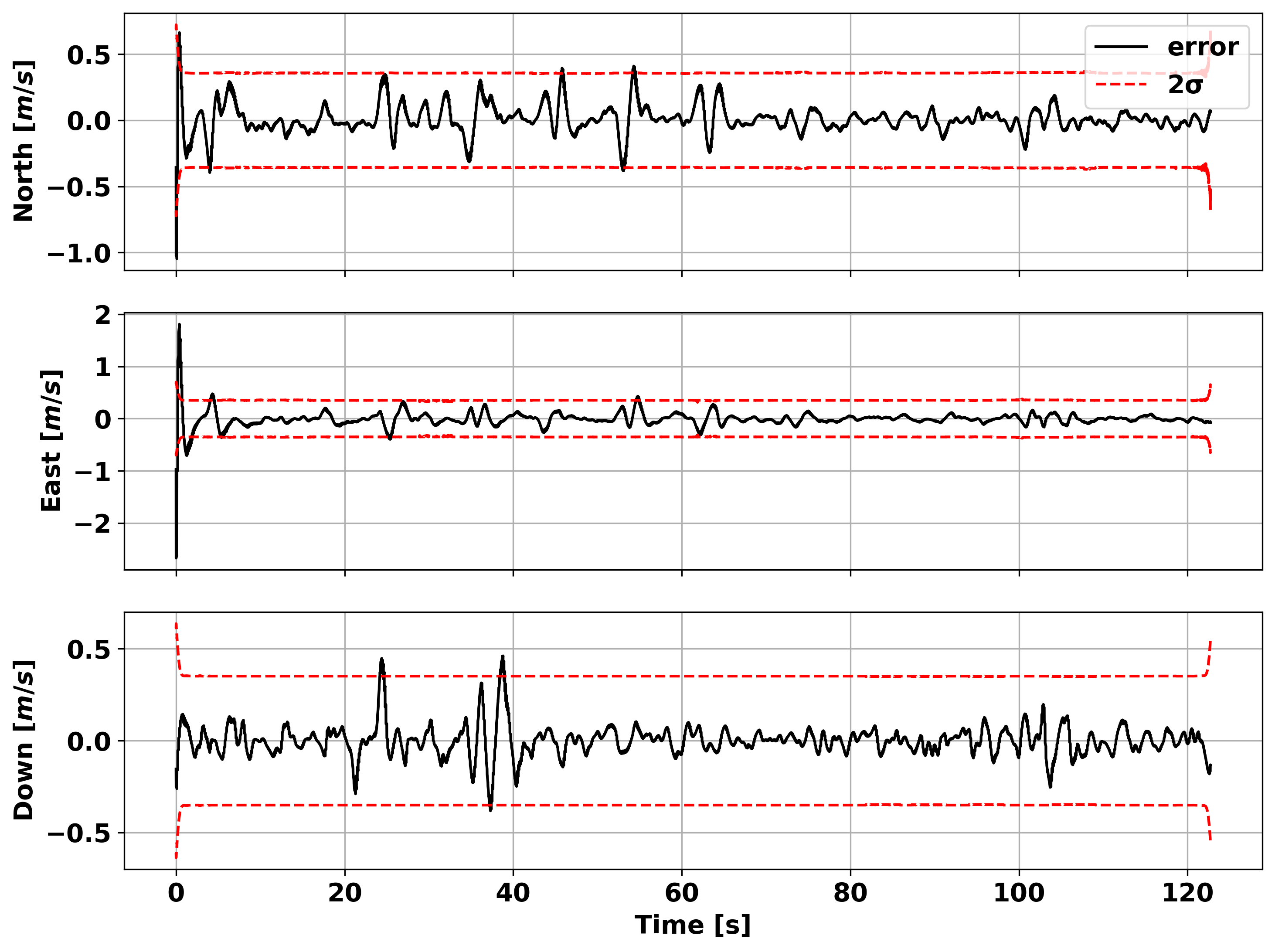}
        \caption{MR-Tes (Traj.~2) velocity error.}
    \end{subfigure}\hfill
    \begin{subfigure}{0.32\textwidth}
        \includegraphics[width=\textwidth]{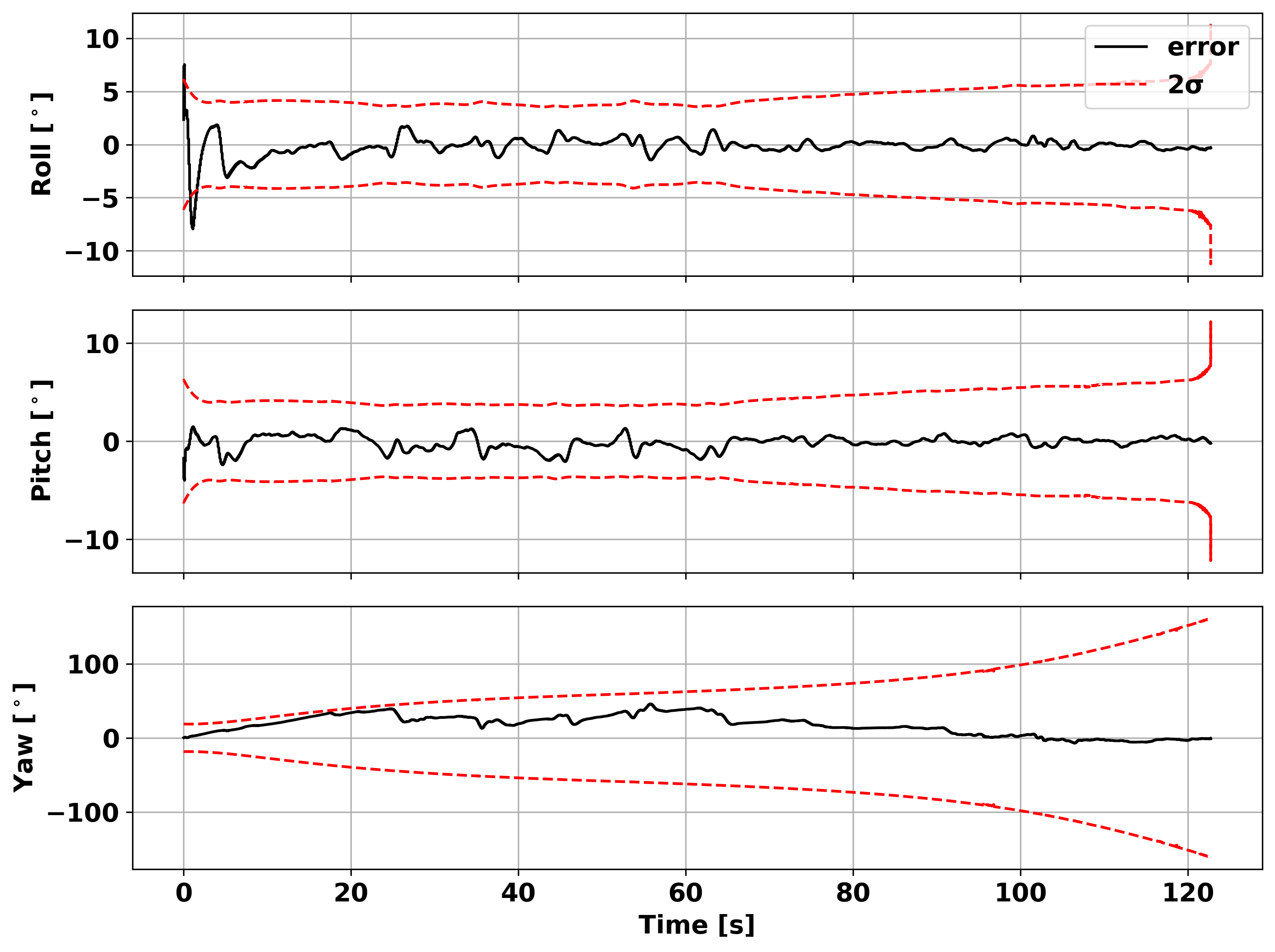}
        \caption{MR-Tes (Traj.~2) orientation error.}
        \label{fig:covariance_plots_mr}
    \end{subfigure}
    \vspace{0.5em}
    \begin{subfigure}{0.32\textwidth}
        \includegraphics[width=\textwidth]{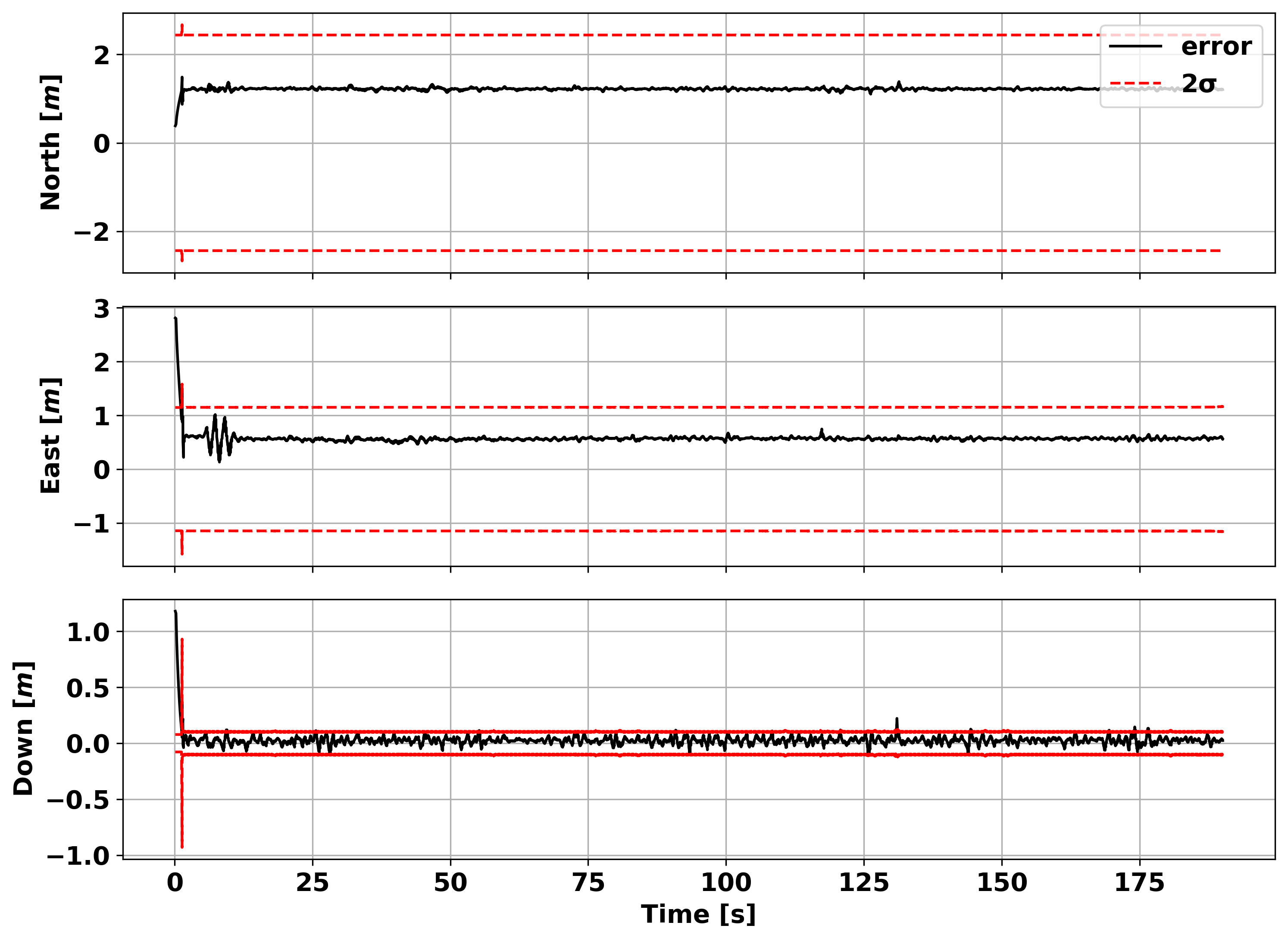}
        \caption{QR-Tes (Traj.~15) position error.}
    \end{subfigure}\hfill
    \begin{subfigure}{0.32\textwidth}
        \includegraphics[width=\textwidth]{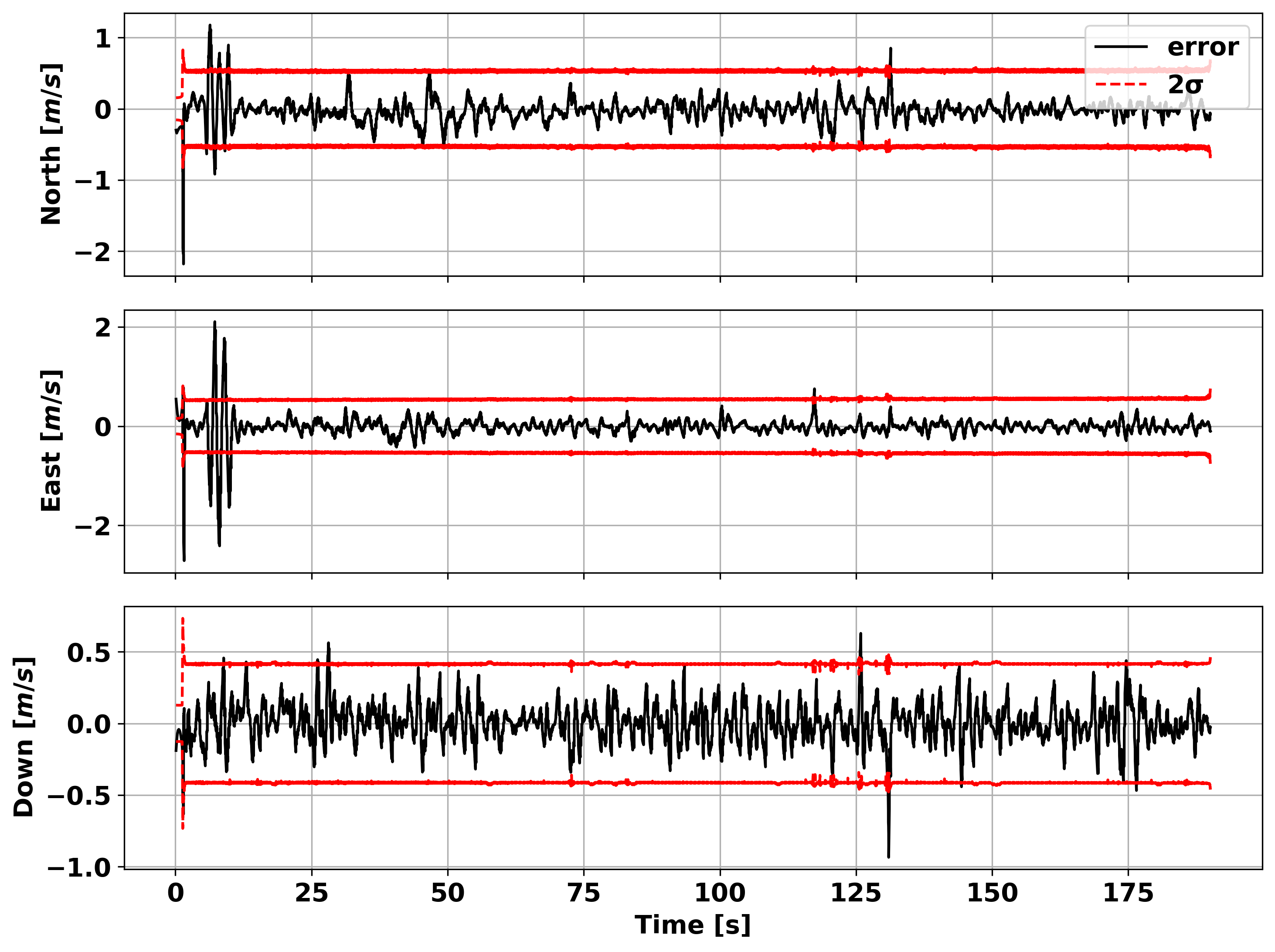}
        \caption{QR-Tes (Traj.~15) velocity error.}
    \end{subfigure}\hfill
    \begin{subfigure}{0.32\textwidth}
        \includegraphics[width=\textwidth]{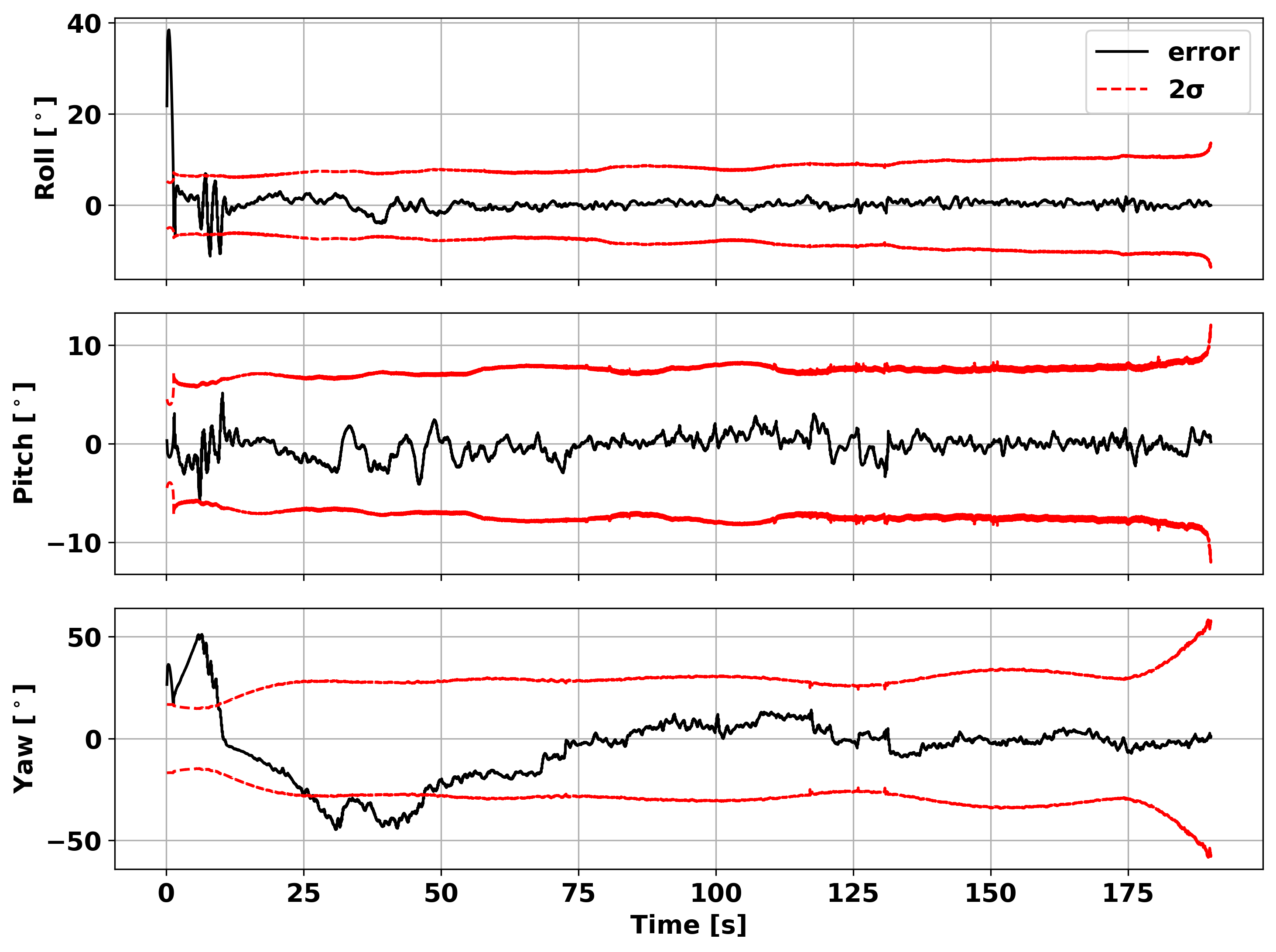}
        \caption{QR-Tes (Traj.~15) orientation error.}
        \label{fig:covariance_plots_qr}
    \end{subfigure}
    \caption{BLENDS navigation error and $2\sigma$ covariance bounds 
    for a representative test trajectory from the mobile robot (top) and 
    quadrotor (bottom) datasets. Each subplot shows the per-axis error 
    (black) and the corresponding $\pm2\sigma$ uncertainty envelope 
    (red dashed), demonstrating that the smoothed error remains 
    consistent with the predicted covariance throughout the 
    trajectory.}
    \label{fig:covariance_plots}
\end{figure*}
\\ \noindent
An exception is observed for yaw, where all methods, including the 
classical smoothers, exhibit degraded performance or even 
regression relative to the EKF on certain trajectories. This is a 
known consequence of the fact that heading is not observable 
in a loosely coupled INS/GNSS system without a dual-antenna 
configuration or magnetometer aiding~\cite{teodori2020analysis}. As a result, the yaw error 
grows unbounded even in the presence of GNSS position updates. The $2\sigma$ envelope in 
Figs.~\ref{fig:covariance_plots}(c) and~\ref{fig:covariance_plots}(f) 
reflects this diverging uncertainty and explains why smoothing the 
heading does not yield a meaningful improvement, once the yaw estimate 
has drifted significantly from its true value, neither the forward nor 
the backward filter retains sufficient heading information to recover 
it, and the smoother merely inherits the diverged estimate from both 
passes.
\\ \noindent
Of particular interest are the north and east position error states, 
shown in Figs.~\ref{fig:covariance_plots}(a) 
and~\ref{fig:covariance_plots}(d), where BLENDS expands the $2\sigma$ 
envelope relative to the classical smoothers. This is consistent with 
the presence of a systematic position offset, by incorporating the 
learned correction, the smoother explicitly acknowledges and expresses 
the residual uncertainty associated with the bias. For the mobile 
robot dataset, an even more notable behavior is observed in 
Fig.~\ref{fig:covariance_plots}(a): the envelope inflates and 
contracts adaptively in response to the vehicle's manoeuvres, with 
the error signal following accordingly. This reflects the network's 
ability to modulate its uncertainty estimates based on the local 
dynamics of the trajectory. For all remaining states, the error 
remains well bounded within the predicted envelope, 95\% confidence interval, with only a small 
number of outlier estimates, which is closely consistent with 
Bayesian estimation theory. 
\\ \noindent
An additional way to evaluate a smoother is by quantifying how much 
the uncertainty is reduced relative to the forward EKF. For this 
purpose, the PCI~\eqref{eqn:pci} over time is shown in 
Figs.~\ref{fig:arazim_pci} and~\ref{fig:mars_pci}. BLENDS maintains 
a consistently higher PCI than both TFS and RTSS across all test 
trajectories on both datasets, indicating that the learned covariance 
modification produces tighter smoothed covariances without sacrificing 
consistency. On the quadrotor dataset, BLENDS sustains a PCI of 
approximately 60--70\% throughout the majority of both test 
trajectories, compared to 50--60\% for the classical smoothers, with 
the gap widening towards the end of each trajectory. For the mobile robot dataset, the PCI values of BLENDS are 
comparable to those of the classical smoothers, which at first glance 
may appear to suggest a lesser degree of uncertainty reduction. 
However, this result should be interpreted carefully. As discussed 
above, BLENDS deliberately inflates the position uncertainty envelope 
to account for the systematic offset, thereby increasing the trace of 
the smoothed covariance in the position states. Since the PCI is 
computed over the full state covariance trace, this inflation in the 
position block partially offsets the uncertainty reduction achieved 
in the remaining states. Consequently, PCI values similar to those of classical smoothers indicate that BLENDS achieves a meaningful reduction in uncertainty across velocity, orientation, and bias states. At the same time, it provides a more honest and informative representation of the position uncertainty, consistent with the principles of Bayesian estimation.
\\ \noindent
A key result of this evaluation is that BLENDS generalizes across two 
substantially different platforms with distinct dynamics and operating 
speeds, a mobile robot and a quadrotor, using the same 
network architecture and training procedure, with only the correction 
bounds and window duration adapted to reflect the characteristics of 
each platform. The consistent horizontal position improvement across 
all four unseen test trajectories demonstrates that the learned 
correction captures a systematic bias that is present across different 
trajectory types and motion profiles, rather than overfitting to the 
specific patterns seen during training. This cross-platform 
generalization, combined with the covariance consistency demonstrated 
by the $2\sigma$ envelopes and the PCI results, confirms that BLENDS 
operates within the theoretical foundations of Bayesian estimation 
while meaningfully extending the capabilities of classical smoothing 
algorithms in the presence of biased GNSS measurements.
\begin{figure}[!t]
    \centering
    \begin{subfigure}{\columnwidth}
        \includegraphics[width=\columnwidth]{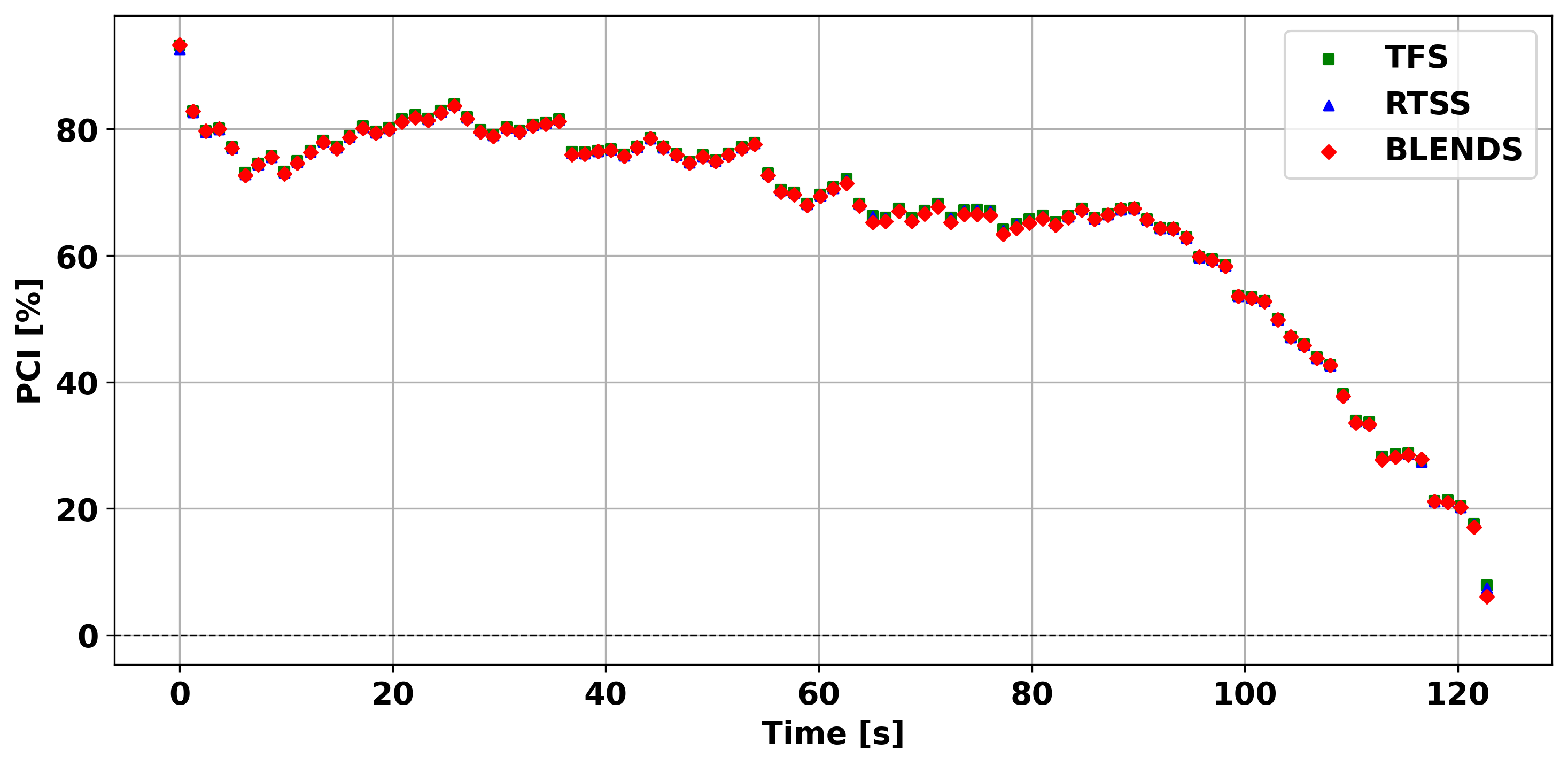}
        \caption{MR-Tes (Traj.~2).}
    \end{subfigure}
    \vspace{0.5em}
    \begin{subfigure}{\columnwidth}
        \includegraphics[width=\columnwidth]{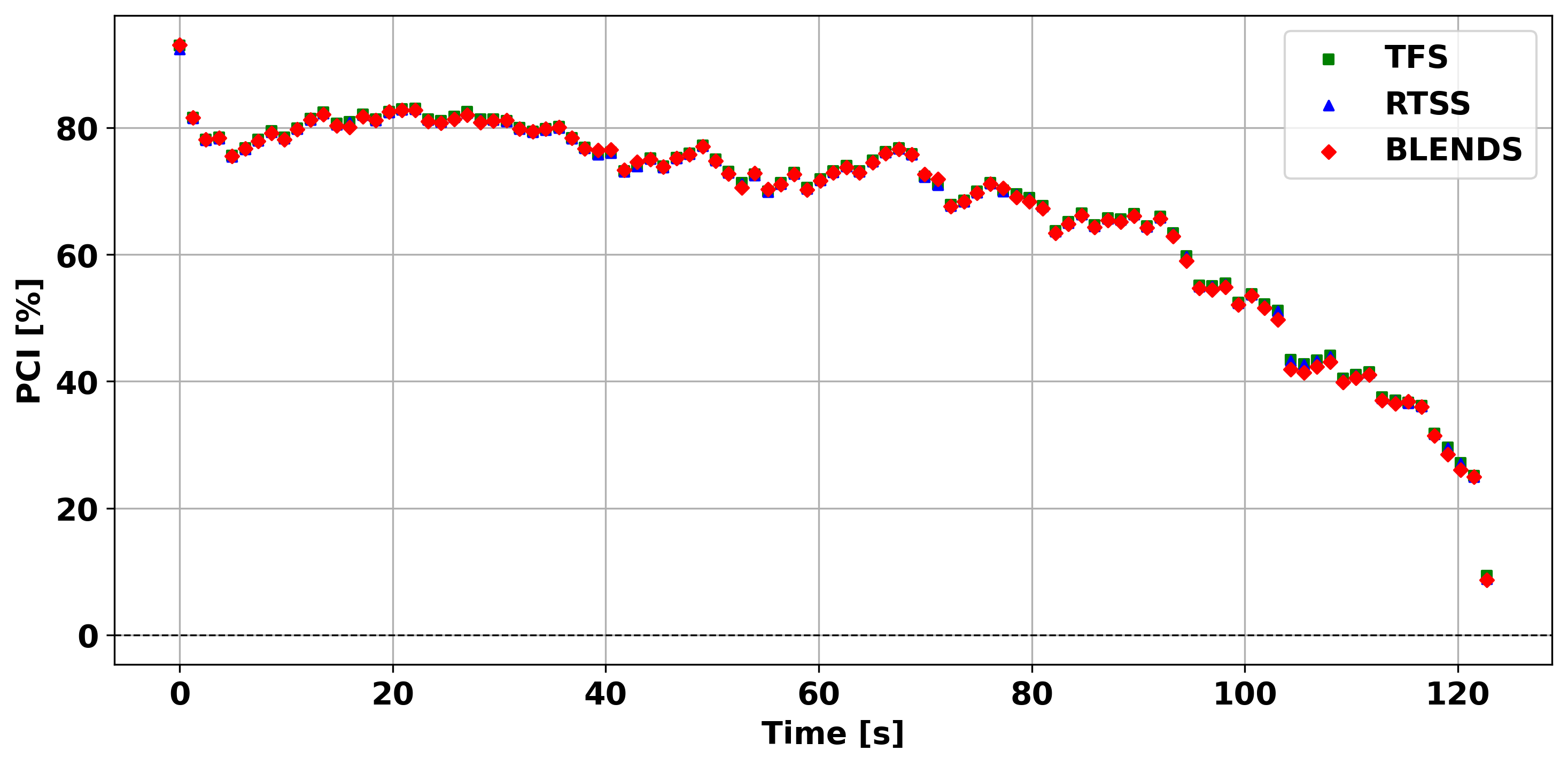}
        \caption{MR-Tes (Traj.~3).}
    \end{subfigure}
    \caption{PCI over time for 
    TFS, RTSS, and BLENDS relative to the forward EKF on the 
    mobile robot dataset test trajectories.}
    \label{fig:arazim_pci}
\end{figure}
\section{Conclusion}\label{sec:conclusion}
\noindent
This paper presented BLENDS, a Bayesian learning-enhanced deep smoothing framework for inertial-aided navigation. BLENDS augments the classical two-filter 
smoother with a transformer-based neural network. Operating as a software-only post-processing framework, BLENDS addresses a fundamental limitation of conventional smoothing algorithms, namely their inability to correct systematic position biases inherited from the forward filter. It achieves this by learning to modify the filter covariances and apply an additive correction directly within the Bayesian smoothing framework.
\\ \noindent
The proposed method was evaluated on two real-world datasets covering 
substantially different platforms and dynamics, a mobile robot and a quadrotor.
Across all unseen test trajectories, BLENDS consistently achieved 
substantial reductions in horizontal position error, with improvements 
of up to 63\% over the forward EKF in the north direction and more 
than 40\% in the east direction, while the classical smoothers RTSS 
and TFS provided negligible position improvement. The $2\sigma$ covariance envelopes confirm that the BLENDS smoothed 
estimates remain statistically consistent throughout all trajectories, 
and the PCI results demonstrated that 
BLENDS achieves a higher reduction in estimation uncertainty than the 
classical smoothers across both platforms.
\\ \noindent
A key design principle of BLENDS is its adherence to the theoretical 
foundations of Bayesian estimation. The BCL jointly supervises the smoothed mean and covariance, enforcing 
minimum-variance estimates while maintaining a tight and honest 
uncertainty representation. The curriculum ramp strategy further 
constrains the learned correction to physically meaningful ranges 
throughout training, preventing the network from producing 
unrealistic outputs. Nevertheless, BLENDS has several limitations. The correction bounds 
and window size must be manually tuned for each platform, requiring 
prior knowledge of the expected dynamic ranges of the navigation 
states. Additionally, similar to model-based approaches, the performance of 
BLENDS is inherently limited by state observability, as unobservable 
states cannot be meaningfully corrected regardless of the learned 
output.
\\ \noindent
The cross-platform generalization demonstrated in this work is achieved with a single network architecture and training procedure, adapted only through platform-specific correction bounds. These results suggest that BLENDS captures a systematic bias inherent to loosely coupled INS/GNSS navigation rather than specific to any particular platform or trajectory. This makes BLENDS a practical and deployable 
software-only post-processing tool for improving navigation accuracy 
in applications where RTK-level precision is desired but only 
standard GNSS measurements are available during operation.
\begin{figure}[!h]
    \centering
    \begin{subfigure}{\columnwidth}
        \includegraphics[width=\columnwidth]{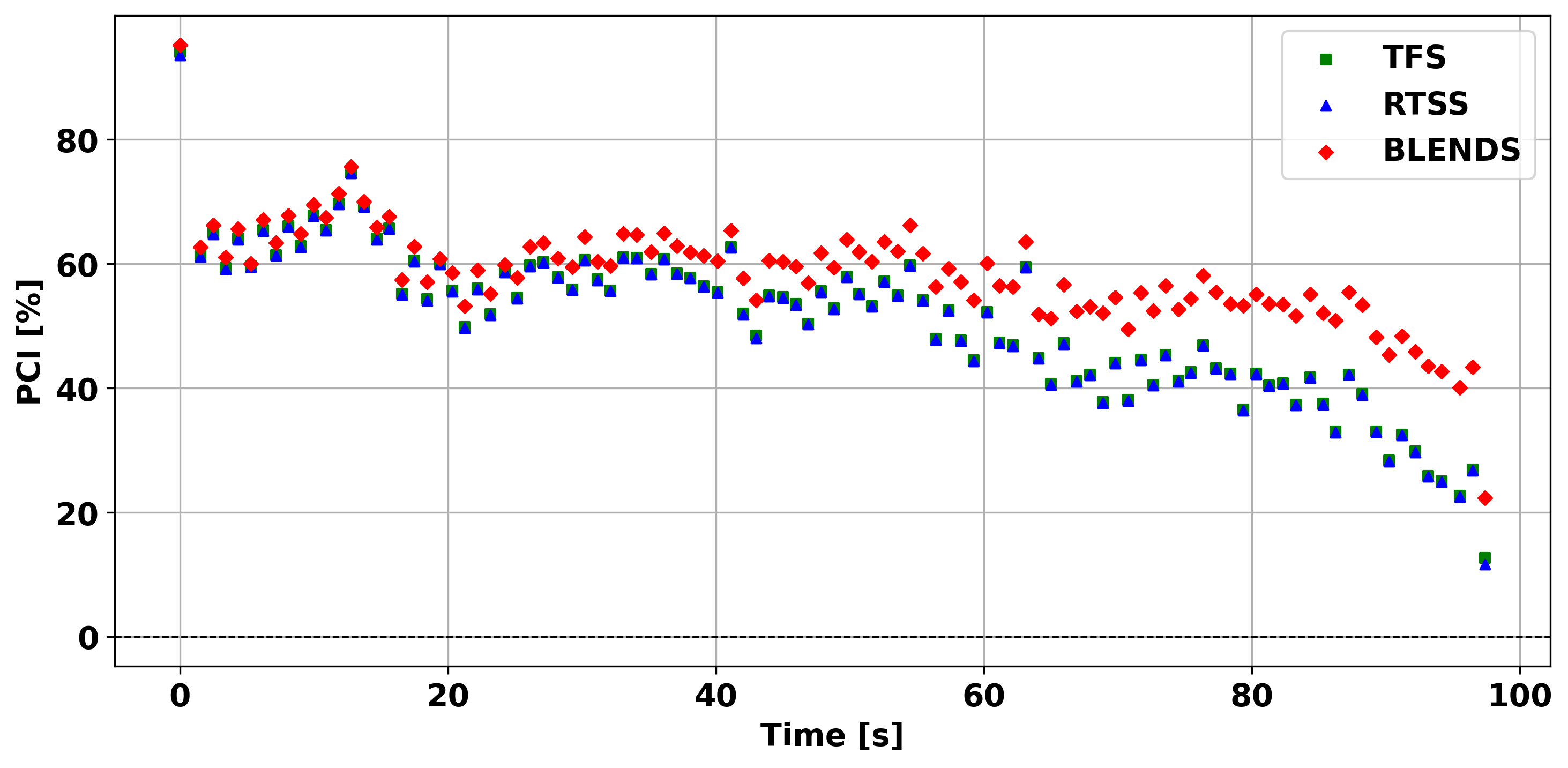}
        \caption{QR-Tes (Traj.~12).}
    \end{subfigure}
    \vspace{0.5em}
    \begin{subfigure}{\columnwidth}
        \includegraphics[width=\columnwidth]{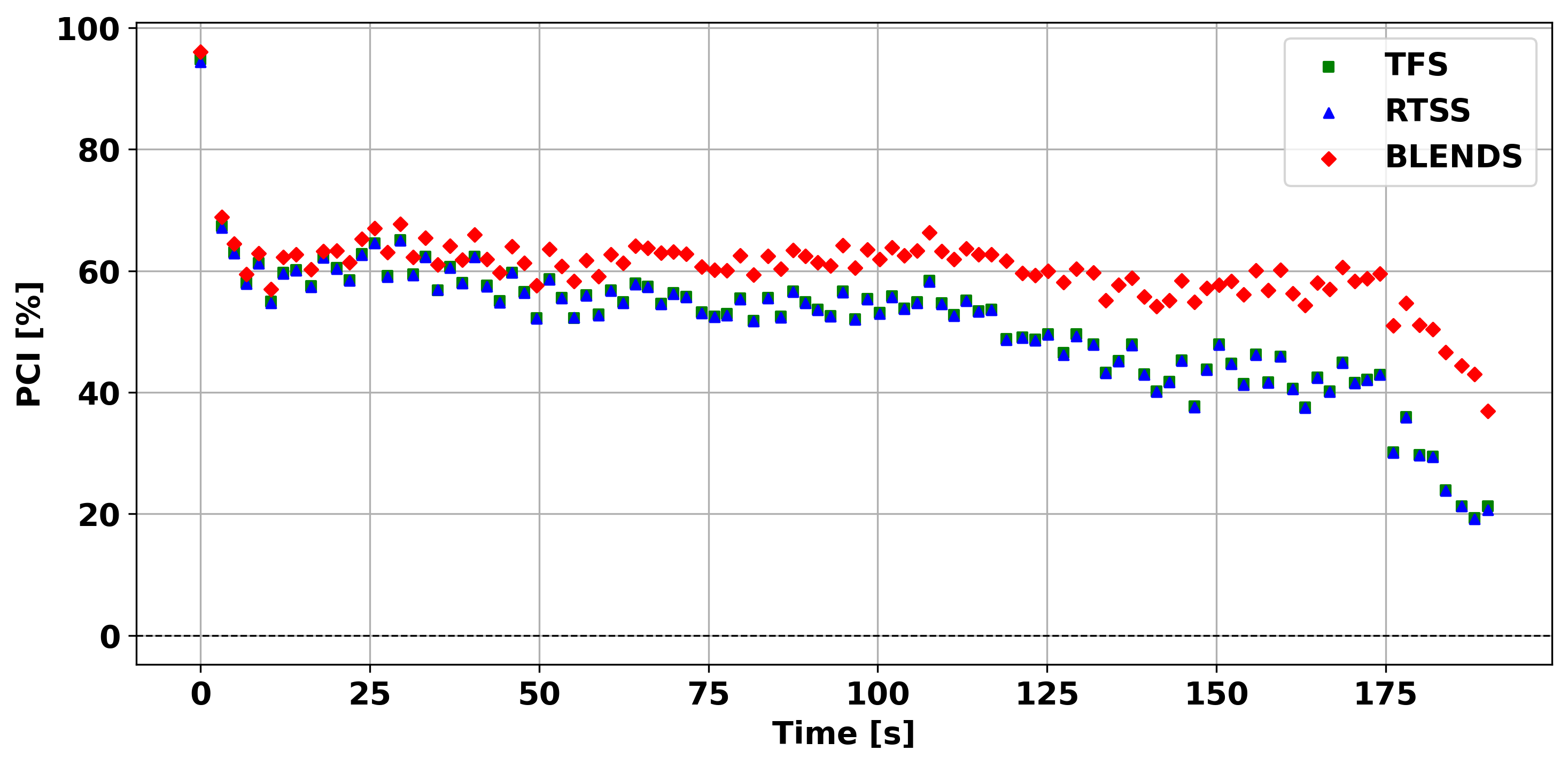}
        \caption{QR-Tes (Traj.~15).}
    \end{subfigure}
    \caption{PCI over time for 
    TFS, RTSS, and BLENDS relative to the forward EKF on the 
    quadrotor dataset test trajectories.}
    \label{fig:mars_pci}
\end{figure}
\section*{Acknowledgment}
\noindent
N.C. is supported by the Maurice Hatter Foundation and the University of Haifa presidential scholarship for outstanding students on a direct Ph.D. track. The authors would like to thank Arazim Ltd. for their support and for providing the Arazim 
Exiguo\textsuperscript{\textregistered} EX-300 unit used in the data collection.

\bibliographystyle{IEEEtran}
\bibliography{bio.bib}

\end{document}